%% file: main.tex
\documentclass{article}

\usepackage{microtype}
\usepackage{graphicx}
\usepackage{subfigure}
\usepackage{booktabs} 

\usepackage{hyperref}

\usepackage[preprint]{icml2026}

\usepackage{amsmath}
\usepackage{amssymb}
\usepackage{mathtools}
\usepackage{amsthm}
\usepackage{bbm}
\usepackage{xcolor}
\usepackage{soul}

\definecolor{Gray}{gray}{0.95}
\definecolor{Green}{HTML}{A4E28E} 
\definecolor{LightGreen}{HTML}{E0F6DE} 
\definecolor{Blue}{HTML}{BFC0FF} 
\definecolor{LightBlue}{HTML}{E7E6FF} 
\definecolor{DarkRed}{HTML}{C00000}
\definecolor{DarkGreen}{HTML}{3B7D23}

\usepackage[capitalize,noabbrev]{cleveref}
\input{math_commands.tex}

\usepackage{multirow}

\theoremstyle{plain}
\newtheorem{theorem}{Theorem}[section]
\newtheorem{proposition}[theorem]{Proposition}

\theoremstyle{definition}

\theoremstyle{remark}

\NewDocumentCommand{\xiusi}
{ mO{} }{\textcolor{cyan}{\textsuperscript{\textit{Xiusi}}\textsf{\textbf{\small[#1]}}}}

\NewDocumentCommand{\hongru}
{ mO{} }{\textcolor{red}{\textsuperscript{\textit{Hongru}}\textsf{\textbf{\small[#1]}}}}

\NewDocumentCommand{\bolian}
{ mO{} }{\textcolor{blue}{\textsuperscript{\textit{Bolian}}\textsf{\textbf{\small[#1]}}}}

\usepackage[textsize=tiny]{todonotes}

\usepackage{xspace}
\newcommand{\method}{\textsc{Flare}\xspace}
\newcommand{\methodfull}{\textsc{\textbf{Flare}} (\textsc{\textbf{F}}uture-aware \textsc{\textbf{l}}ook\textsc{\textbf{a}}head with \textsc{\textbf{r}}eward \textsc{\textbf{e}}stimation)\xspace}

\icmltitlerunning{A Planning-Centric Analysis of Long-Horizon Decision Making in LLM Agents}

\begin{document}

\twocolumn[
    \icmltitle{Why Reasoning Fails to Plan: A Planning-Centric Analysis of\\ Long-Horizon Decision Making in LLM Agents}



    \icmlsetsymbol{equal}{*}

    \begin{icmlauthorlist}
        \icmlauthor{Zehong Wang}{nd}
        \icmlauthor{Fang Wu}{stanford}
        \icmlauthor{Hongru Wang}{edinburgh}
        \icmlauthor{Xiangru Tang}{yale}
        \icmlauthor{Bolian Li}{purdue}
        \icmlauthor{Zhenfei Yin}{ox}
        \\
        \icmlauthor{Yijun Ma}{nd}
        \icmlauthor{Yiyang Li}{nd}
        \icmlauthor{Weixiang Sun}{nd}
        \icmlauthor{Xiusi Chen}{uiuc}
        \icmlauthor{Yanfang Ye}{nd}
    \end{icmlauthorlist}

    \icmlaffiliation{nd}{University of Notre Dame}
    \icmlaffiliation{stanford}{Stanford University}
    \icmlaffiliation{purdue}{Purdue University}
    \icmlaffiliation{edinburgh}{University of Edinburgh}
    \icmlaffiliation{ox}{University of Oxford}
    \icmlaffiliation{yale}{Yale University}
    \icmlaffiliation{uiuc}{UIUC}

    \icmlcorrespondingauthor{Zehong Wang}{zwang43@nd.edu}
    \icmlcorrespondingauthor{Yanfang Ye}{yye7@nd.edu}

    \icmlkeywords{Machine Learning, ICML}

    \vskip 0.3in
]



\printAffiliationsAndNotice{}  

\input{section/abstract}

\input{section/introduction}
\input{section/preliminary}
\input{section/analysis}
\input{section/methodology}

\input{section/experiments}
\input{section/related_work}
\input{section/conclusion}


\bibliography{citation}
\bibliographystyle{icml2026}


\newpage
\appendix
\onecolumn
\input{appendix/related_work}

\input{appendix/theory}
\input{appendix/method}

\input{appendix/experimental_setup}
\input{appendix/experimental_results}

\end{document}

%% file: math_commands.tex
\usepackage{amsmath,amsfonts,bm}

\def\eqref#1{equation~\ref{#1}}

\def\1{\bm{1}}

\DeclareMathAlphabet{\mathsfit}{\encodingdefault}{\sfdefault}{m}{sl}
\SetMathAlphabet{\mathsfit}{bold}{\encodingdefault}{\sfdefault}{bx}{n}



%% file: section/abstract.tex
\begin{abstract}
    Large language model (LLM)-based agents exhibit strong step-by-step reasoning capabilities over short horizons, yet often fail to sustain coherent behavior over long planning horizons.
    We argue that this failure reflects a fundamental mismatch: step-wise reasoning induces a form of step-wise greedy policy that is adequate for short horizons but fails in long-horizon planning, where early actions must account for delayed consequences.
    From this planning-centric perspective, we study LLM-based agents in deterministic, fully structured environments with explicit state transitions and evaluation signals.
    Our analysis reveals a core failure mode of reasoning-based policies: locally optimal choices induced by step-wise scoring lead to early myopic commitments that are systematically amplified over time and difficult to recover from.
    We introduce \methodfull as a minimal instantiation of future-aware planning to enforce explicit lookahead,  value propagation, and limited commitment in a single model, allowing downstream outcomes to influence early decisions.
    Across multiple benchmarks, agent frameworks, and LLM backbones, \method consistently improves task performance and planning-level behavior, frequently allowing LLaMA-8B with \method to outperform GPT-4o with standard step-by-step reasoning.
    These results establish a clear distinction between reasoning and planning.
\end{abstract}

%% file: section/introduction.tex
\section{Introduction}

Large language model (LLM)-based agents have demonstrated impressive step-by-step reasoning capability \citep{wei2022chain,wang2023selfconsistency,yao2022react,shinn2023reflexion,wu2024autogen,xi2025agentgymrl}.
However, growing empirical evidence suggests that models with strong reasoning capability does not work well on tasks requiring long-horizon decision making \citep{duan2024gtbench,jin2024impact}.
This discrepancy points to a fundamental conceptual gap: \emph{reasoning is not planning}.
From a planning perspective, we argue that standard LLM reasoning, e.g., chain-of-thought (CoT) \citep{wei2022chain}, can be viewed as a form of step-wise greedy policy based on local scores, e.g., next-step plausibility.
While such reasoning selects locally plausible actions, it cannot reshape early decisions according to their long-term consequences, since actions that are locally optimal may lead to poor outcomes over long horizons.
Conflating these two capabilities leads to systematic failures in long-horizon tasks.

In practice, this mismatch manifests as brittle long-horizon behavior. In unseen environments, small early mistakes often compound: trajectories grow longer than necessary \citep{yao2022react,huang2022inner}, decisions lose internal consistency \citep{yao2023tree,park2023generative}, and actions that appear locally reasonable fail when executed \citep{song2023llm,wu-etal-2024-reasoning}.
Although techniques like self-reflection \citep{shinn2023reflexion,madaan2023self}, bootstrapping \citep{wang2023self,zelikman2022star}, and self-improvement \citep{wang2024voyager} can reduce local errors, they remain within the same step-wise decision making, selecting actions based on local signals without explicit evaluation of outcomes.
This raises a central question: \emph{can LLM-based agents truly plan over long horizons, or can they only reason locally about the next step?}

Answering this question is challenging because long-horizon tasks entangle multiple sources of difficulty. They are consequence-driven \citep{song2023llm}, often provide delayed or sparse feedback \citep{xi2025agentgym,shridhar2021alfworld,yao2022webshop}, and are typically partially observable or stochastic \citep{ahn2022can}.
These factors obscure whether failures arise from environmental complexity or from limitations of the agent's decision mechanism itself.

To disentangle these effects, we adopt a controlled-variable diagnostic setting: deterministic environments with explicit state representations, known transition dynamics, and evaluative feedback available at planning time.
This setting removes environment-side uncertainty to isolate agent-side decision processes.
Within this controlled setting, our empirical and theoretical analyses reveal a consistent failure pattern.
Even when states and evaluation signals are explicit, agents guided by policies with step-wise signals face systematic failures in long-horizon planning, since they favor locally appealing actions that may lead to long-horizon failure.
Early deviations are then amplified over time, leading to rapid degradation as the planning horizon grows \citep{yu2020mopo,wei2022chain,besta2024graph}.
Crucially, this failure reflects a structural limitation of step-wise greedy decision making \citep{liu2023agentbench,cemri2025multi}: once actions are selected based on local signals, early commitments become difficult to revise, locking agents into irreversible trajectories \citep{xi2025agentgymrl,yao2023beyond}.

This diagnosis highlights a fundamental limitation of reasoning-based policies: improving local reasoning alone is insufficient to yield coherent long-horizon planning, regardless of whether it is obtained via prompting, beam search, or reinforcement learning.
Instead, coherent planning requires three minimal mechanisms: explicit lookahead, backward value propagation, and limited commitment through receding-horizon replanning.
Existing reasoning paradigms violate at least one of these requirements: for example, CoT and beam search \citep{snell2024scaling} lack explicit future evaluation and value propagation; Reflexion \citep{shinn2023reflexion} and ReAct \citep{yao2022react} lack explicit lookahead;
reinforcement learning \citep{guo2025deepseek} performs planning offline by embedding long-horizon decision-making into model parameters, therefore lacks online revision of early commitments.

Motivated by this insight, we propose \methodfull, a planning framework that minimally instantiates these design principles.
\method performs explicit lookahead to evaluate counterfactual future trajectories, propagates trajectory-level outcomes backward to inform early decisions, and commits only to the next action under a receding-horizon scheme.
Rather than introducing additional heuristics, \method serves as a canonical construction of future-aware planning, demonstrating how coherent long-horizon behavior can arise once these minimal mechanisms are enforced.
Across benchmarks, agent frameworks, and LLM backbones, \method consistently mitigates the long-horizon failures identified in our diagnosis, often allowing smaller models to outperform larger models with reasoning-based policies.
More broadly, our findings suggest that long-horizon failures of LLM-based agents reflect a fundamental gap between reasoning and planning.


Our main contributions are as follows:
\begin{itemize}
      \item \textbf{A planning-centric formulation of LLM reasoning.}
            We formalize LLM reasoning as a step-wise greedy decision policy, enabling a principled analysis of why strong local reasoning does not yield coherent long-horizon planning.

      \item \textbf{Mechanism-level diagnosis of reasoning-based decision making.}
            In deterministic, fully structured environments with explicit transitions and evaluation signals, we show that the pure reasoning induces early myopic commitments that are systematically amplified over time and are difficult to recover from.

      \item \textbf{Theoretical distinction between reasoning and planning.}
            We prove that step-wise greedy reasoning is arbitrarily suboptimal for long-horizon tasks, that increasing search width via beam search does not resolve this limitation, and that even one-step explicit lookahead suffices to strictly improve decision capability.

      \item \textbf{A minimal planning mechanism for LLM agents.}
            We propose \method as a minimal instantiation of future-aware planning that combines explicit lookahead, value propagation, and limited commitment, and demonstrate consistent improvements across benchmarks and agent frameworks.
\end{itemize}

%% file: section/preliminary.tex
\section{Problem Definition}

We study the relationship between planning and reasoning in LLM-based agents from a planning-first perspective.
We formalize LLM reasoning as a specific class of planning strategy: \emph{step-wise greedy policy}, referring to \emph{reasoning-based policy}, where the agent selects actions based solely on local evaluation at the current state.
This formulation allows us to compare reasoning-based policies with planning mechanisms that incorporate explicit long-term evaluation 
under explicit environments.

To enable such a comparison, we consider a controlled environment in which transition dynamics and evaluative feedback are explicit at planning time, isolating the agent's decision mechanism from environment-side uncertainty.


\noindent\textbf{Environment.}
We model the environment as a deterministic state-transition system
$\mathcal{G} = (\mathcal{S}, \mathcal{A}, T, \hat{r})$,
where $\mathcal{S}$ is a discrete state space, $\mathcal{A}$ an action space, and
$T: \mathcal{S} \times \mathcal{A} \rightarrow \mathcal{S}$ the transition function.
At state $s_t$, the agent selects $a_t \in \mathcal{A}(s_t)$, inducing a trajectory
\begin{equation}
    \tau = (s_0, a_0, s_1, \dots, s_H), \quad s_{t+1}=T(s_t,a_t).
\end{equation}
An evaluative signal $\hat{r}: \mathcal{S}\times\mathcal{A}\times\mathcal{S}\rightarrow\mathbb{R}$ is available at planning time, and the cumulative return is
\begin{equation}
    R(\tau)=\sum_{t=0}^{H-1}\hat{r}(s_t,a_t,s_{t+1}).
\end{equation}

\noindent\textbf{Agent Policy.}
We abstract agent behavior as a policy $\pi:\mathcal{S}\rightarrow\mathcal{A}$.
A reasoning-based policy selects
\begin{equation}
    \pi_{\text{greedy}}(s)=\arg\max_{a\in\mathcal{A}(s)} \hat{r}(s,a,T(s,a)),
\end{equation}
corresponding to step-wise greedy planning based on local evaluation.
Beam search extends this paradigm by selecting among the top-$k$ actions, while lookahead incorporates limited trajectory rollouts to estimate future return.

\noindent\textbf{Problem Statement.}
We ask whether such step-wise greedy planning strategies (i.e., reasoning) can support coherent long-horizon decision making, even when full state information, transition dynamics, and evaluative feedback are explicitly available.
If not, this establishes a structural distinction between reasoning-based and planning-based decision making, demonstrating that planning capability cannot be reduced to step-wise reasoning.

%% file: section/analysis.tex
\section{Failure Modes of Long-Horizon Planning}
\label{sec:diagnosis}

\begin{figure}[!t]
    \centering
    \subfigure{\includegraphics[width=0.49\linewidth]{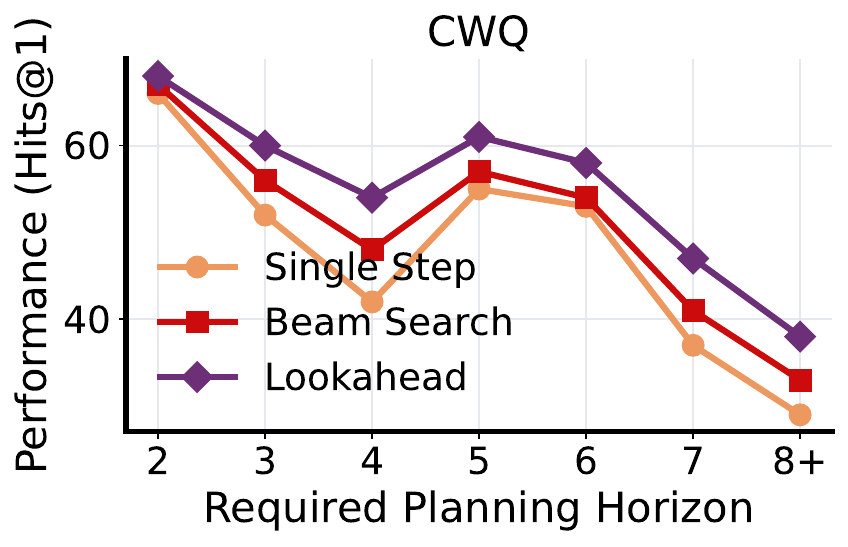}}
    \subfigure{\includegraphics[width=0.49\linewidth]{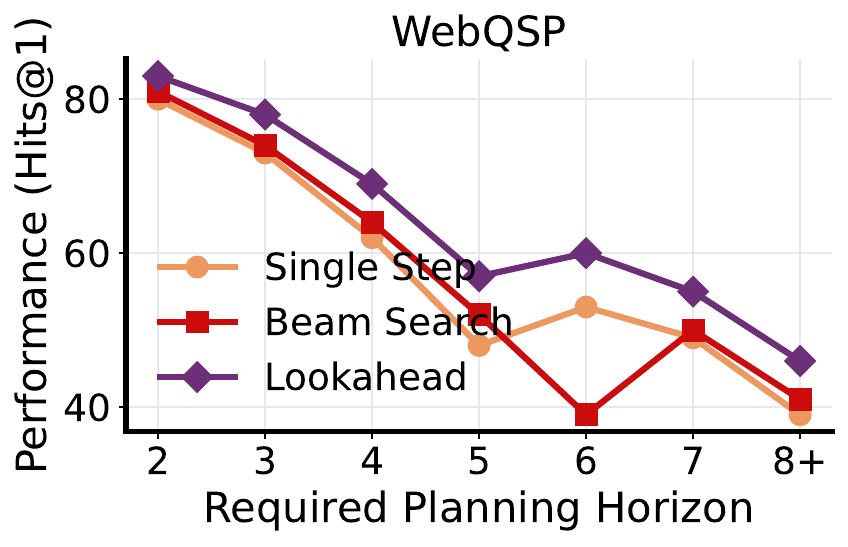}}

    \subfigure{\includegraphics[width=0.49\linewidth]{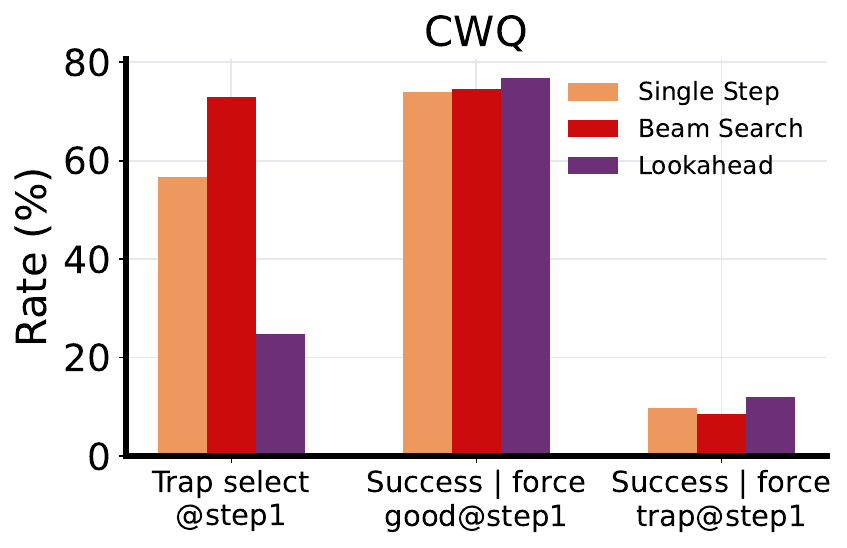}}
    \subfigure{\includegraphics[width=0.49\linewidth]{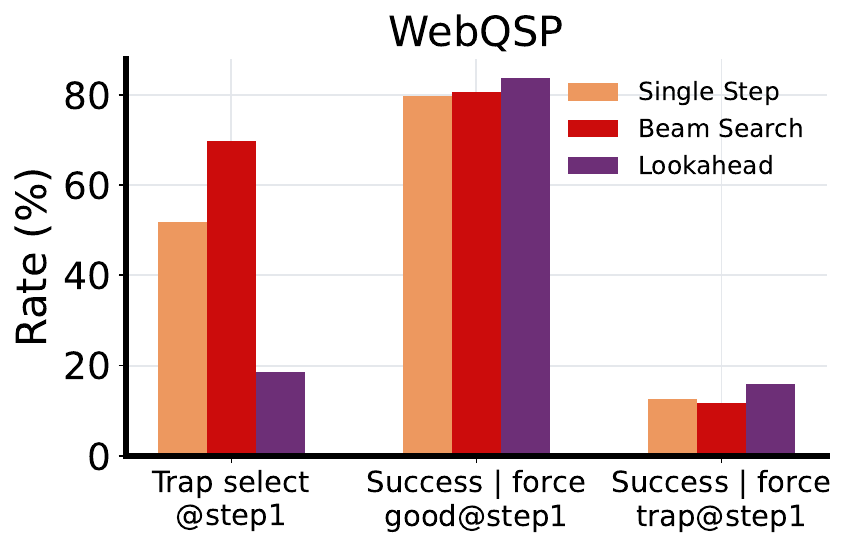}}
    \caption{
        \textbf{Planning performance and early myopia under increasing horizons.}
        \textit{Top}: Accuracy (Hits@1) on CWQ and WebQSP by required planning horizon. Performance degrades rapidly for single step (i.e., reasoning-based policy) and beam search as the horizon increases, while lookahead degrades more slowly.
        \textit{Bottom}: Myopic trap selection at the first decision step. Step-wise scoring (single step and beam search) frequently selects actions with high immediate scores but poor long-term outcomes, whereas lookahead suppresses such traps via future-aware evaluation.
    }
    \label{fig:diagnosis-1}
\end{figure}

We show that long-horizon failures of pure reasoning arise from structural limitations of reasoning-based policy itself, rather than from environment uncertainty or model capacity.
In deterministic and fully structured environments, reasoning-based policies systematically introduce early myopic deviations that are amplified over time, leading to irreversible trajectory collapse.
We characterize this mechanism through performance degradation, local decision bias, the position of the first error, and post-error recoverability.
Incorporating explicit lookahead with trajectory-level evaluation substantially alleviates all observed failure modes.

\subsection{Empirical Analysis of Planning Failures}
\label{sec:diagnosis empirical}

We instantiate long-horizon planning using knowledge graph question answering (KGQA) \citep{chen2024plan,sun2024thinkongraph} as an evaluation sandbox, where agent behavior corresponds to multi-hop graph traversal.
Experiments are conducted on CWQ \citep{talmor2018web} and WebQSP \citep{yih2016value}, which exhibit different shortest-path length distributions and enable analysis across varying horizons.
To isolate agent-side decision mechanisms, we adopt the oracle-structure setting \citep{sun2024thinkongraph}, in which each query is paired with a subgraph guaranteed to contain a valid solution path.
Across all experiments, we fix the LLM backbone\footnote{LLaMA 3.1 8B by default \citep{grattafiori2024llama}.}, agent architecture, and evaluation signal, and vary only the planning strategy.
We consider three paradigms: (i) reasoning-based policy via greedy step-wise selection (single step), (ii) width-extended reasoning (beam search), and (iii) shallow planning-based policy with rollout evaluation (lookahead).
Results are shown in Figure~\ref{fig:diagnosis-1} and Figure~\ref{fig:diagnosis-2}, and additional details are provided in Appendix \ref{sec:experimental}.

\begin{figure}[!t]
    \centering

    \subfigure{\includegraphics[width=0.49\linewidth]{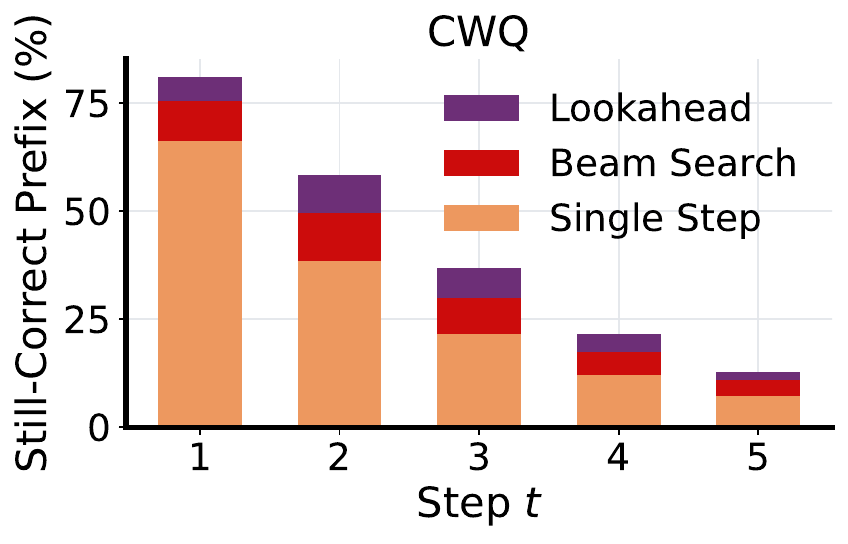}}
    \subfigure{\includegraphics[width=0.49\linewidth]{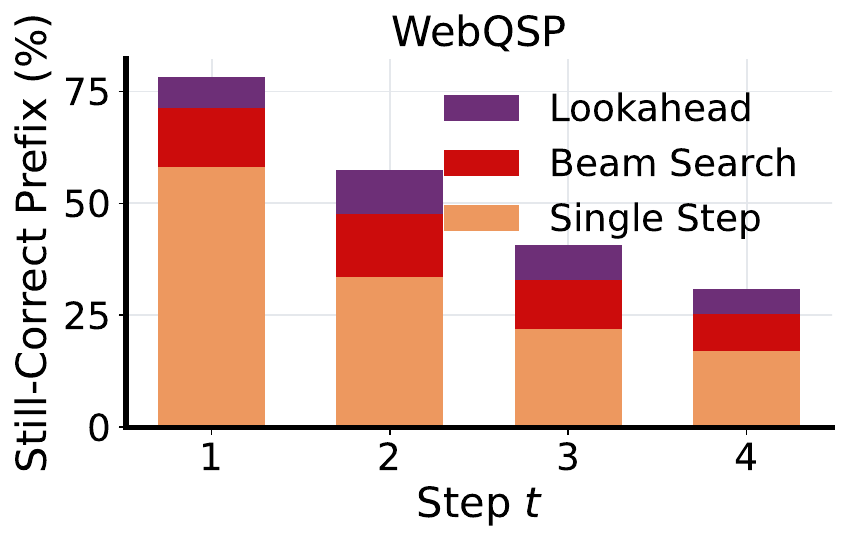}}

    \subfigure{\includegraphics[width=0.49\linewidth]{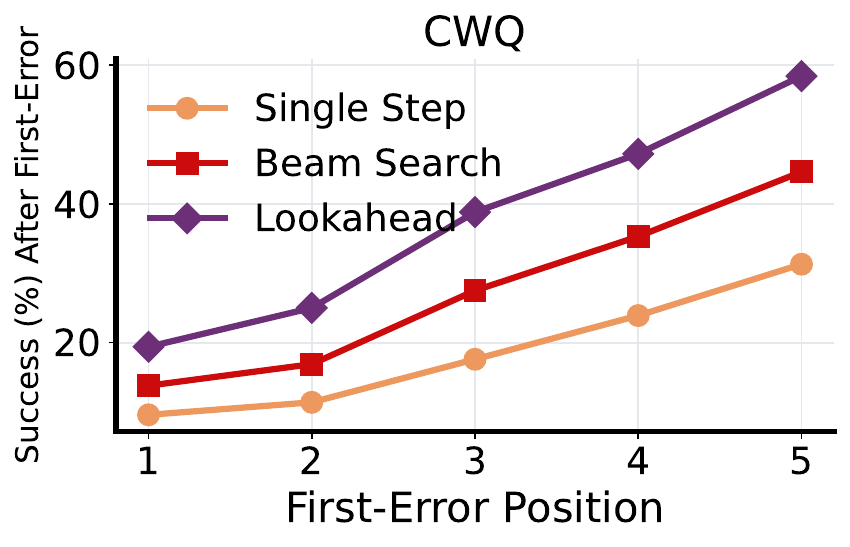}}
    \subfigure{\includegraphics[width=0.49\linewidth]{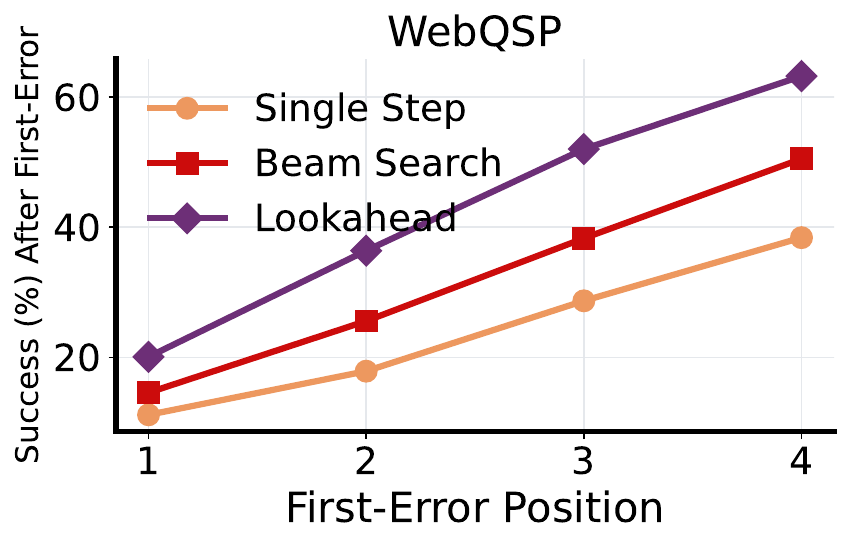}}
    \caption{
        \textbf{Error accumulation and recovery dynamics across decision paradigms.}
        \textit{Top}: Fraction of trajectories remaining on a correct prefix over decision steps. Step-wise scoring (single step and beam search) policies diverge early and irreversibly, while lookahead preserves correct prefixes substantially longer.
        \textit{Bottom}: Success rate conditioned on the position of the first error. After early deviations, single step (i.e., reasoning-based policy) and beam search policies rarely recover, whereas Lookahead enables consistent recovery across all error positions.
    }
    \label{fig:diagnosis-2}
\end{figure}

\noindent\textbf{Observation 1: Long-horizon collapse is a failure of reasoning-based policy.}
Across datasets, accuracy decreases rapidly as the required planning horizon increases (Figure~\ref{fig:diagnosis-1}, top).
Single step degrades sharply beyond shallow instances, and beam search provides only limited and transient improvement.
In contrast, lookahead consistently maintains higher accuracy and degrades more slowly.
This indicates that long-horizon failure is not merely a consequence of longer trajectories, but of relying on step-wise greedy policies.

\noindent\textbf{Observation 2: Reasoning-based policies induce systematic early myopia.}
We construct \emph{myopic traps}: actions that are locally optimal under step-wise scoring but lead to globally suboptimal regions.
At the first decision step, both greedy and beam-based strategies select such actions at high rates (Figure~\ref{fig:diagnosis-1}, bottom), indicating that local signals fail to encode downstream utility.
Lookahead substantially suppresses this bias by incorporating future rollouts.
Causal interventions confirm this mechanism: enforcing a correct first action equalizes performance across strategies, while enforcing a trap action causes universal failure.
Thus, early myopic commitment is the dominant source of long-horizon degradation in reasoning-based policy.

\noindent\textbf{Observation 3: Errors under reasoning-based policy are effectively irreversible.}
Reasoning-based and beam-based policies typically deviate from optimal trajectories within the first few decisions and rarely recover thereafter (Figure~\ref{fig:diagnosis-2}).
Increasing search width delays, but does not prevent, irreversible commitment to suboptimal prefixes.
In contrast, lookahead both postpones the first error and substantially improves recovery across all stages.
This shows that future-aware evaluation (i.e., planning-based policy) is necessary not only to avoid early mistakes, but to enable corrective revisions once errors occur.

\subsection{Fundamental Limits of Reasoning-based Policy in Long-Horizon Planning}

We establish three theoretical results that formalize the mechanism underlying the empirical failures in Section \ref{sec:diagnosis empirical}.
To this end, we abstract the decision signal available to the agent as a step-wise scoring function
\begin{equation}
    \hat{u}: \mathcal{S} \times \mathcal{A} \rightarrow \mathbb{R},
\end{equation}
which assigns a scalar score to each state--action pair.
Importantly, we make no assumption that $\hat{u}(s,a)$ is a consistent estimator of the optimal action-value function or that it encodes cumulative future return.
Complete proofs are deferred to Appendix~\ref{sec:proof}.

\noindent\textbf{Result 1: 
Step-wise greedy policy is arbitrarily suboptimal.
}
We begin with the decision paradigm underlying LLM reasoning: selecting actions greedily according to a step-wise surrogate score $\hat{u}$.
Despite its apparent rationality at each step, we show that this paradigm is fundamentally incompatible with reliable long-horizon planning.

\begin{proposition}
    \label{prop:greedy}
    Let $\pi_{\text{greedy}}$ be a step-wise greedy policy defined by
    $\pi_{\text{greedy}}(s) \in \arg\max_{a \in \mathcal{A}(s)} \hat{u}(s,a)$.
    Suppose there exists a state $s \in \mathcal{S}$ and two actions $a,b \in \mathcal{A}(s)$ such that
    $\hat{u}(s,a) > \hat{u}(s,b)$.
    Then for any finite horizon $H \ge 2$ and any constant $M > 0$, there exists a deterministic environment
    $\mathcal{G} = (\mathcal{S}, \mathcal{A}, T, \hat{r})$, consistent with $\hat{u}$, as
    \begin{equation}
        R(\tau(\pi_{\text{greedy}})) = 0,
        \qquad
        \max_{\pi} R(\tau(\pi)) \ge M.
    \end{equation}
    Consequently, the suboptimality gap of $\pi_{\text{greedy}}$ can be made arbitrarily large.
\end{proposition}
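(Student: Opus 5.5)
The proof is a direct construction. Since we are free to design the environment, we build a small deterministic tree rooted at the given state $s$ whose transitions and rewards are completely decoupled from the score $\hat{u}$. The greedy policy follows $\hat{u}$ and commits to $a$, while all reward sits behind $b$. ``Consistent with $\hat{u}$'' is read as: the environment keeps $\mathcal{S}$, $\mathcal{A}(s)$ and the scores $\hat{u}(s,\cdot)$ exactly as given, so $\pi_{\text{greedy}}$ is determined by $\hat{u}$ and not by $\hat{r}$. The point is that the proposition assumes nothing linking $\hat{u}$ to cumulative return, so $\hat{r}$ may be chosen adversarially.

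\textbf{Construction.} Fix $H\ge 2$ and $M>0$, and take $s_0=s$.
\begin{itemize}
\item From $s_0$, set $T(s_0,a)=s^{a}_1$ and $T(s_0,b)=s^{b}_1$, two fresh states.
\item Below $s^{a}_1$, attach a chain of fresh absorbing ``trap'' states $s^{a}_2,\dots,s^{a}_H$.
\item Below $s^{b}_1$, attach a chain $s^{b}_2,\dots,s^{b}_H$.
\item In every state after time $0$, give each available action the same successor. This way the future is fixed by the first choice, whatever the later tie-breaking of $\arg\max$ does.
\item Define $\hat{r}\equiv 0$ on every transition, except $\hat{r}(s^{b}_{H-1},\cdot,s^{b}_H)=M$. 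Alternatively, put reward $M/(H-1)$ on each step of the $b$-branch after the first.
\item Set $\hat{r}(s_0,a,s^a_1)=\hat{r}(s_0,b,s^b_1)=0$, so the immediate reward does not favor $b$ either.
\end{itemize}
The condition $H\ge 2$ is exactly what allows the payoff to be placed strictly after the first decision. With $H=1$, consistency between the scores and a single-step reward could force greedy to be optimal.

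\textbf{Verification.} First fix the greedy root action. If $\hat{u}(s,\cdot)$ has other maximizers, delete them from $\mathcal{A}(s)$ in the constructed environment, or route them into the trap branch too. Then $\arg\max_{c\in\mathcal{A}(s)}\hat{u}(s,c)$ never selects $b$, and we may assume $\pi_{\text{greedy}}(s_0)=a$ (or another trap action). The greedy trajectory therefore stays in the trap chain, and every reward on it is $0$, so $R(\tau(\pi_{\text{greedy}}))=0$. The policy that plays $b$ at $s_0$ and any action afterwards collects $M$, so $\max_\pi R(\tau(\pi))\ge M$. Since $M$ is arbitrary, the gap is unbounded.

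\textbf{Main obstacle.} The delicate step is pinning down ``consistent with $\hat{u}$.'' On the non-root states, $\hat{u}$ can be extended arbitrarily, because every action there leads to the same successor and so these scores are irrelevant. The part that needs care is the root: it must be that $b$ is never the greedy choice, and that possible maximizers other than $a$ do not escape to the reward branch. The plan handles this either by restricting $\mathcal{A}(s)$ to $\{a,b\}$ or by making every non-$b$ action lead to the trap.
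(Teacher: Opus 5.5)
Your proof is correct and is essentially the paper's construction: two fresh successors of $s$, a forced zero-reward continuation after the first move, and the payoff $M$ placed only behind $b$, so the strict inequality $\hat{u}(s,a)>\hat{u}(s,b)$ keeps the greedy policy out of the rewarding branch. Your extra step of routing every non-$b$ root action into the trap branch is a worthwhile refinement: the paper tacitly takes $\mathcal{A}(s_0)=\{a,b\}$ and asserts that greedy picks $a$ without checking that $a$ is actually the maximizer.
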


Proposition~\ref{prop:greedy} shows that the reasoning-based policy can exhibit arbitrarily poor long-horizon behavior.
Even when a trajectory with unbounded return exists, such policies can be forced into zero-return outcomes by committing to an early locally preferred action.
This provides a formal explanation for the myopic traps observed empirically: once decisions are guided solely by local scores, catastrophic deviation becomes unavoidable in the worst case.

\noindent\textbf{Result 2: Increasing search width does not overcome the limitations of reasoning-based policy.}
A natural response to the failure of step-wise greedy policy is to expand the search width and maintain multiple candidate trajectories.
Beam search therefore represents a stronger form of reasoning-based policy, but still ranks prefixes using step-wise surrogate scores.
We show that this modification does not fundamentally change the underlying decision paradigm.

\begin{proposition}
    \label{prop:beam}
    Consider a beam search policy that maintains at most $B \in \mathbb{N}$ partial trajectories
    and ranks prefixes solely by the accumulated surrogate score
    \begin{equation}
        S_t = \sum_{i=0}^{t-1} \hat{u}(s_i,a_i).
    \end{equation}
    For any beam width $B \in \mathbb{N}$, any horizon $H \ge 2$, and any constant $M > 0$,
    there exists a deterministic environment $\mathcal{G}$ and a surrogate function $\hat{u}$ such that
    (i) there exists a unique optimal trajectory $\tau^*$ with return $R(\tau^*) = M$; and
    (ii) the beam-search policy necessarily prunes the prefix of $\tau^*$ at depth $1$, yielding final return $R(\tau)=0$.
\end{proposition}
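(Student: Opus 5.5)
The plan is an explicit construction in which the optimal trajectory starts with a single low-scoring action. That action is placed at the root together with $B$ decoy actions that score higher. Since ranking uses only the accumulated surrogate score $S_t$, the optimal prefix is pruned at depth $1$. We are free to choose both $\mathcal{G}$ and $\hat{u}$, so the construction can be made explicit.

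\textbf{Construction.} Fix $B$, $H\ge 2$ and $M>0$.
\begin{enumerate}
\item At the root $s_0$, take $B+1$ actions $a^*, d_1,\dots,d_B$ with successors $s^*_1, x^1_1,\dots,x^B_1$. Each successor heads a disjoint deterministic chain of length $H-1$, so every action leads to a fresh state.
\item Set $\hat{u}(s_0,d_j)=1$ for all $j$ and $\hat{u}(s_0,a^*)=0$. Set $\hat{u}\equiv 0$ at every later state on every chain. A single action per state suffices there, or any number of actions with equal scores.
\item Set $\hat{r}$ to be zero everywhere except the final transition of the $a^*$-chain, $\hat{r}(s^*_{H-1},a,s^*_H)=M$. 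Placing it on the last step makes sense because $H\ge2$ means this reward is invisible at depth $1$.
\end{enumerate}

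\textbf{Verification of (i).} The trajectory $\tau^*$ through $a^*$ collects exactly $M$. Every other trajectory begins with some $d_j$, never enters the $a^*$-chain, and so collects $0$. Hence $\tau^*$ is the unique optimal trajectory with $R(\tau^*)=M$.

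\textbf{Verification of (ii).} After expanding the root, the $B+1$ depth-$1$ prefixes have scores $S_1=1$ for the decoys and $S_1=0$ for $a^*$. Keeping at most $B$ prefixes by score therefore retains exactly the $B$ decoys and discards the $a^*$ prefix. This holds without any tie-breaking ambiguity, because the scores are strictly separated. From then on, every surviving prefix lies in a decoy chain. The chains are disjoint, so no later expansion can re-enter $s^*_1$. Whatever the beam finally returns is a decoy trajectory with return $0$.

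\textbf{Main obstacle.} The delicate step is making the pruning argument robust to variants of beam search. If pruning happens only when candidates exceed $B$, then having exactly $B+1$ children at the root forces the prune. If the implementation keeps prefixes at depth $0$ differently, the same argument applies at the first expansion. Disjoint chains also rule out any recombination or merging of states that could revive the optimal prefix. If uniqueness of the optimal trajectory is read as strict, the padded chains must not allow alternative actions with the same return. Giving each chain state a single action removes this issue, and then $\mathcal{A}(s)$ is nonempty everywhere as required.
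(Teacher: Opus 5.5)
Your proposal is correct and follows essentially the same construction as the paper: root decoys with strictly higher surrogate score fill the width-$B$ beam at depth $1$, the lower-scored branch that alone carries reward $M$ is pruned, and every surviving prefix has return $0$. The differences are cosmetic. You use $B$ decoys where the paper uses $B+1$, which suffices because the score separation is strict. You also use disjoint chains of length $H-1$ with single actions, instead of a terminal action followed by absorbing padding.
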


For any fixed beam width, there exist environments in which the optimal trajectory is irrevocably pruned after the first decision, before any informative future reward is observed.
Beam search thus delays commitment by expanding width, but cannot prevent premature elimination of globally optimal paths when ranking is driven by local scores.
This aligns with the empirical diagnosis that beam search mitigates but does not resolve long-horizon failures.

\noindent\textbf{Result 3: Even minimal lookahead (planning-based policy) strictly expands planning capability.}
We now show that the above limitations are not inherent to finite-horizon decision making itself, but to the absence of explicit future-aware planning.
Introducing even a single step of lookahead fundamentally changes what an agent can achieve in the worst case.

\begin{proposition}
    \label{prop:lookahead}
    Consider a lookahead policy that, at state $s$, evaluates each candidate action
    $a \in \mathcal{A}(s)$ by simulating trajectories of depth at most $k \ge 1$
    under the true transition function $T$, and selects the action maximizing the resulting
    cumulative return estimate.
    Then there exists a class of deterministic environments in which
    (i) any policy that selects actions solely based on step-wise surrogate scores attains zero return; and
    (ii) a lookahead policy with $k = 1$ achieves the optimal return.
    Consequently, even one-step lookahead strictly dominates step-wise decision-making
    in terms of worst-case performance guarantees.
\end{proposition}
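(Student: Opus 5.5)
The plan is to construct an explicit family of deterministic environments, a two-branch "trap" gadget, and to check both claims of Proposition~\ref{prop:lookahead} on it. Proposition~\ref{prop:greedy} already contains the core idea; here the construction must be arranged so that one-step lookahead sees the decisive reward while step-wise scoring does not.

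\textbf{Construction.} For a horizon $H\ge 1$ and a constant $M>0$, take a root state $s_0$ with two actions $a,b$. The surrogate satisfies $\hat{u}(s_0,a)>\hat{u}(s_0,b)$, so $a$ is the locally preferred trap. The transitions are $T(s_0,a)=s_{\mathrm{trap}}$ and $T(s_0,b)=s_{\mathrm{good}}$. The evaluation signal is $\hat{r}(s_0,a,s_{\mathrm{trap}})=0$ and $\hat{r}(s_0,b,s_{\mathrm{good}})=M$. Both successors are absorbing, with all further rewards equal to $0$ (or a single self-loop action with zero reward). The key design choice is to place the entire long-horizon payoff on the transition out of $s_0$. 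This is legitimate because, as stressed before Result~1, $\hat{u}$ is not assumed to encode $\hat{r}$ or future return. The class is obtained by letting $M$, $H$, and the values of $\hat{u}$ vary subject to $\hat{u}(s_0,a)>\hat{u}(s_0,b)$.

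\textbf{Step (i).} Interpret "selects actions solely based on step-wise surrogate scores" as any rule that picks an $\arg\max$ of $\hat{u}(s,\cdot)$ at each state, which is the greedy policy of Proposition~\ref{prop:greedy}. Such a rule chooses $a$ at $s_0$, enters $s_{\mathrm{trap}}$, and collects return $0$. A beam variant scored by accumulated $\hat{u}$ with $B=1$ behaves the same way. For larger $B$, one can invoke the pruning construction of Proposition~\ref{prop:beam}, adding $B$ decoy actions whose $\hat{u}$ scores all exceed $\hat{u}(s_0,b)$.

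\textbf{Step (ii).} With $k=1$, the lookahead policy evaluates $\hat{r}(s_0,a,T(s_0,a))=0$ and $\hat{r}(s_0,b,T(s_0,b))=M$ using the true $T$, so it chooses $b$. Every trajectory has return at most $M$, so this choice is optimal. The gap $M$ versus $0$ then gives strict worst-case dominance.

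\textbf{Main obstacle.} The delicate part is making clause (i) hold for \emph{any} surrogate-based policy, not only greedy ones. I would resolve this in the quantifier order: for each fixed surrogate-based selection rule, adversarially choose which action carries the reward $M$ so that it is the action the rule does not pick. This is possible because the rule's choice depends only on $\hat{u}$, which is independent of $\hat{r}$. The lookahead policy, by contrast, queries $\hat{r}$ through $T$ and therefore adapts to the placement. Randomized rules are handled by giving the root many actions and placing $M$ on one of lowest selection probability; I expect to restrict to deterministic rules, as the statement implicitly does.
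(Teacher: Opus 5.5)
Your proof is valid and uses the same two-branch gadget as the paper, which reuses the environment of Proposition~\ref{prop:greedy}. However, you place the payoff differently, and that changes what the separation demonstrates. The paper sets both root rewards to $0$ and puts $M$ on the \emph{second} transition, $\hat{r}(s_b,a_\bot,s_\bot)=M$; its one-step lookahead sums the first two rewards. You put $M$ on the transition out of $s_0$, so your $k=1$ lookahead reduces to $\arg\max_a \hat{r}(s_0,a,T(s_0,a))$. That is precisely the reasoning-based $\pi_{\text{greedy}}$ of the paper's Problem Definition. This is formally allowed, because $\hat{u}$ is decoupled from $\hat{r}$ in Section~\ref{sec:diagnosis}. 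But your example then separates two step-wise scores ($\hat{u}$ versus the true immediate reward), not step-wise decision making from lookahead: greedy selection on the immediate $\hat{r}$ escapes your trap. In the paper's placement every one-transition signal is uninformative at the root, since immediate rewards tie at $0$. Even that rule, with $\hat{u}$ tie-breaking, fails, and only a policy that sees past the first transition finds $M$; this is what makes it a genuine myopic trap. What your placement buys is robustness to how depth is counted: it works whether $k=1$ means one or two simulated transitions, whereas the paper's gadget needs the latter. Under the appendix's own $\tilde{Q}_1(s,a)=\hat{r}(s,a,T(s,a))$, the paper's gadget is exactly one segment of the $k=1$ construction in Proposition~\ref{prop:suboptimality}, which defeats one-step lookahead. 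Separately, your adversarial placement of $M$ against an arbitrary deterministic $\hat{u}$-based rule is more careful than the paper's claim that such a rule ``necessarily chooses $a$''. That claim holds only for $\arg\max$ rules (an $\arg\min$ rule picks $b$), and your refinement transfers unchanged to the paper's placement.
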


Proposition~\ref{prop:lookahead} establishes a strict distinction between reasoning- and planning-based policies.
There exist environments in which all policies based solely on step-wise surrogate scores provably fail, while a policy with one-step lookahead achieves the optimal return.
This formalizes the intuition that planning capability arises from explicitly evaluating downstream consequences, rather than from stronger reasoning or wider search alone.
We further analyze how lookahead performance degrades under delayed rewards and truncated horizons in Appendix~\ref{sec:suboptimality}.

\subsection{Implications for Long-Horizon Decision Making}

Our empirical and theoretical results jointly support a planning-first conclusion: \emph{planning capability cannot be reduced to step-wise reasoning}.
Reasoning-based policies optimize local scores and therefore make irreversible early commitments; empirically, these commitments amplify into long-horizon collapse, and theoretically, they can be arbitrarily suboptimal.
Beam search increases width but preserves the same step-wise scoring mechanism, and thus cannot provide worst-case planning guarantees.
Consequently, improving reasoning accuracy or scaling step-wise search is insufficient for reliable long-horizon planning.

By contrast, planning-based policy strictly expands what agents can achieve.
Even one-step lookahead avoids myopic traps in environments where all step-wise strategies provably fail.
These results point to three minimal requirements for coherent long-horizon planning:
\textbf{(1) explicit future evaluation}, \textbf{(2) backward value propagation}, and \textbf{(3) limited commitment}.
Our \method instantiates these requirements.

%% file: section/methodology.tex
\section{\method for Long-Horizon Planning}

We present \method, a planning mechanism for LLM-based agents.
In contrast to reasoning-based policies, which select actions step by step using local scores, \method evaluates actions through their long-term consequences and propagates trajectory-level feedback backward to guide early decisions.
Concretely, \method instantiates two minimal components of planning:
(i) explicit lookahead via trajectory simulation, and
(ii) backward value propagation.
We implement these components using Monte Carlo Tree Search (MCTS) \citep{silver2018general} under a receding-horizon scheme \citep{mattingley2011receding}, committing only to the next action and replanning after each transition to avoid brittle long-term commitment under noisy evaluation.

While recent work has integrated MCTS into LLM-based agents \citep{hao2023reasoning,zhang2024rest,yoon2025monte,wang2025damr,wu2025deepsearch}, our contribution is conceptual rather than algorithmic: we show that explicit planning mechanisms are necessary for long-horizon coherence, whereas step-wise reasoning (even when augmented with wider search) remains structurally insufficient.
Additional details are provided in Appendix~\ref{sec:technical details}. 


\subsection{Explicit Lookahead}

A central failure of reasoning-based policies is that actions are evaluated using only local surrogate scores.
\method replaces this paradigm with future-aware evaluation: each candidate action is assessed by simulating the trajectories it induces, and decisions are guided by trajectory-level outcomes rather than step-wise signals.

\noindent\textbf{Selection.}
\method maintains a search tree rooted at the current state and allocates simulation effort using
\begin{equation}
    \label{eq:select}
    a_t = \arg\max_{a \in \mathcal{A}(s_t)}
    \left(
    Q(s_t,a) + c \sqrt{\frac{\log N(s_t)}{N(s_t,a) + 1}}
    \right),
\end{equation}
where $Q(s_t,a)$ aggregates returns from simulated trajectories passing through $(s_t,a)$ and $N(s_t,a)$ is the visit count.
This prioritizes branches with favorable long-term outcomes, enabling early decisions to reflect downstream consequences rather than local heuristics.

\noindent\textbf{Expansion with Action Pruning.}
Tree expansion brings new successor states when previously unexplored state--action pairs are encountered.
To control computational cost, expansion is restricted to a bounded candidate set
\begin{equation}
    \mathcal{A}_k(s) = \phi(s), \qquad |\mathcal{A}_k(s)| \le k,
\end{equation}
where $\phi$ proposes feasible actions (implemented by an LLM in our experiments).
Importantly, action pruning only limits which futures are evaluated and does not influence action values or preferences.

\input{table/plan-strategy-compare}

\subsection{Value Propagation}

Lookahead alone is insufficient unless information from simulated futures can revise early decisions.
\method therefore enforces backward value propagation: trajectory-level outcomes are aggregated and propagated to update the estimated quality of earlier actions, preventing irreversible commitment based on local evidence.

\noindent\textbf{Trajectory-Level Evaluation with Trajectory Memory.}
Each simulated trajectory
\(
\tau = (s_0, a_0, \dots, s_H)
\)
is assigned a cumulative return
\begin{equation}
    \label{eq:return}
    R(\tau) = \sum_{t=0}^{H-1} \hat{r}(s_t, a_t, s_{t+1}),
\end{equation}
where $\hat{r}$ is an evaluative signal available at planning time.
In our implementation, $R(\tau)$ is not computed by directly summing step-wise rewards.
Instead, we use an LLM to score trajectories by comparing multiple sampled candidates and assigning relative preference-based evaluations.
Actions are therefore assessed exclusively through their estimated downstream consequences rather than step-wise scores.

To amortize evaluation cost, \method maintains a bounded trajectory memory
\(
\mathcal{M} = \{(\tilde{\tau}, R(\tilde{\tau}))\}.
\)
For a new trajectory $\tau$, the most similar cached trajectory
\begin{equation}
    \label{eq:retrieve}
    \tilde{\tau}^* = \arg\max_{\tilde{\tau} \in \mathcal{M}} \mathrm{sim}(\tau, \tilde{\tau})
\end{equation}
is reused when $\mathrm{sim}(\tau, \tilde{\tau}^*) \ge \delta$; otherwise $R(\tau)$ is computed directly.
The memory size is bounded by $|\mathcal{M}| \le M$.

\noindent\textbf{Backward Value Update.}
For each simulated trajectory generated on fixed-depth ($H=3$ by default), its return $R(\tau)$, computed over the simulated horizon, is propagated backward to all visited state--action pairs. The action-value function $Q(s_t,a_t)$ is updated by aggregating returns from all simulations that pass though $(s_t,a_t)$.
Action selection at the root follows
\begin{eqnarray}
    a^* = \arg\max_{a \in \mathcal{A}(s)} Q(s,a),
\end{eqnarray}
so early decisions are determined by accumulated long-term evidence rather than step-wise heuristics.



\subsection{Discussion}

The receding-horizon design of \method follows directly from the failure mechanism identified in our diagnosis.
Even under full observability, offline planning is brittle when evaluation signals are imperfect, as is typical for LLM-based verifiers: committing to an entire action sequence propagates early estimation errors throughout the trajectory.
By committing only to the next action and replanning after each transition, \method limits the impact of early mistakes and allows later trajectory-level evidence to revise earlier action values.
This design also clarifies why \method is not restricted to our controlled setting.
The method does not require accurate long-term prediction or exhaustive enumeration of future states; it only assumes that candidate futures can be simulated and evaluated at planning time.
These conditions hold in a wide range of partially observed environments
Thus, although our analysis is conducted in deterministic environments to isolate decision mechanisms, the underlying principle (future-aware evaluation under limited commitment) extends beyond this setting.

%% file: table/plan-strategy-compare.tex
\begin{table*}[!t]
    \caption{
        \textbf{Performance comparison of planning strategies across benchmarks, LLM backbones, and agent frameworks.}
        Results are reported on CWQ, WebQSP, and GrailQA using identical base frameworks and LLM backbones, with the best performance highlighted in \textbf{bold}.
        \method achieves the strongest overall performance, with larger gains on more challenging datasets.
    }
    \label{tab:compare-planning}
    \resizebox{\linewidth}{!}{
        \begin{tabular}{cl| cccc | cccc | c}
            \toprule
                                       &                        & \multicolumn{4}{c|}{\textbf{Think-on-Graph} \citep{sun2024thinkongraph}} & \multicolumn{4}{c|}{\textbf{Plan-on-Graph} \citep{chen2024plan}} & \multicolumn{1}{l}{}                                                                                                            \\ \cmidrule(lr){3-6}\cmidrule(lr){7-10}
            \textbf{Benchmark}         & \textbf{Plan Strategy} & \textbf{LLaMA}                                                           & \textbf{LLaMA}                                                   & \textbf{GPT-4o}      & \textbf{GPT-4o} & \textbf{LLaMA} & \textbf{LLaMA} & \textbf{GPT-4o} & \textbf{GPT-4o} & \textbf{Average} \\
                                       &                        & \textbf{8B}                                                              & \textbf{70B}                                                     & \textbf{mini}        & \textbf{}       & \textbf{8B}    & \textbf{70B}   & \textbf{mini}   & \textbf{}       &                  \\ \midrule
            \multirow{4}{*}{{CWQ}}     & Single Step            & 46.9                                                                     & 50.5                                                             & 51.9                 & 58.2            & 65.8           & 69.7           & 66.7            & 68.6            & 59.8             \\
                                       & Beam Search            & 51.3                                                                     & 53.7                                                             & 58.9                 & 63.5            & 66.7           & 72.1           & 68.7            & 71.3            & 63.3             \\
                                       & Lookahead              & 52.7                                                                     & 56.5                                                             & 60.0                 & 65.8            & 71.7           & 75.6           & 75.0            & 74.7            & 66.5             \\
                                       & \method                & \textbf{58.1}                                                            & \textbf{63.2}                                                    & \textbf{69.9}        & \textbf{73.6}   & \textbf{73.9}  & \textbf{79.1}  & \textbf{77.6}   & \textbf{78.8}   & \textbf{71.8}    \\ \midrule
            \multirow{4}{*}{{WebQSP}}  & Single Step            & 73.8                                                                     & 76.1                                                             & 75.7                 & 78.3            & 79.8           & 81.4           & 80.6            & 80.1            & 78.2             \\
                                       & Beam Search            & 77.1                                                                     & 78.9                                                             & 79.8                 & 82.6            & 82.5           & 85.4           & 81.7            & 86.6            & 81.8             \\
                                       & Lookahead              & 77.9                                                                     & 82.5                                                             & 82.1                 & 84.6            & 84.5           & 89.1           & 88.2            & 88.5            & 84.7             \\
                                       & \method                & \textbf{85.6}                                                            & \textbf{86.6}                                                    & \textbf{88.9}        & \textbf{90.4}   & \textbf{90.7}  & \textbf{93.8}  & \textbf{89.4}   & \textbf{93.9}   & \textbf{89.9}    \\ \midrule
            \multirow{4}{*}{{GrailQA}} & Single Step            & 71.7                                                                     & 76.6                                                             & 75.8                 & 79.1            & 73.1           & 79.3           & 77.4            & 79.2            & 76.5             \\
                                       & Beam Search            & 75.4                                                                     & 79.2                                                             & 77.7                 & 81.4            & 75.9           & 81.1           & 79.2            & 81.9            & 79.0             \\
                                       & Lookahead              & 74.6                                                                     & 77.4                                                             & 77.6                 & 80.6            & 83.9           & 86.9           & 84.2            & 87.9            & 81.6             \\
                                       & \method                & \textbf{80.9}                                                            & \textbf{84.9}                                                    & \textbf{84.7}        & \textbf{86.7}   & \textbf{89.6}  & \textbf{92.8}  & \textbf{90.8}   & \textbf{92.0}   & \textbf{87.8}    \\
            \bottomrule
        \end{tabular}
    }
\end{table*}

%% file: section/experiments.tex
\section{Experiments}

\subsection{Experimental Setup}

We conduct experiments on two categories of environments to evaluate long-horizon planning performance. The primary setting is KGQA (CWQ)\cite{talmor2018web}, WebQSP~\cite{yih2016value}, and GrailQA~\cite{gu2021beyond}), which fully satisfies our controlled diagnostic assumptions by formulating decision-making over explicit states with oracle-structured subgraphs that guarantee the existence of valid solution paths, thereby eliminating environment-side uncertainty. To evaluate generalization beyond structured graph traversal, we additionally consider a tool-use environment ALFWorld \citep{shridhar2021alfworld}, which involves long-horizon goal execution through sequences of discrete tool calls but does not strictly satisfy the controlled-state assumptions; it is therefore used solely to test cross-domain robustness. Within KGQA, we compare four planning strategies that differ only in action selection: single step, beam search, lookahead, and \method on top of ToG \citep{sun2024thinkongraph} and PoG \citep{chen2024plan}, alongside domain-specific KGQA baselines reported in prior work; in the tool-use setting, we implement the same planning strategies on top of ReAct \citep{yao2022react} and Reflexion \citep{shinn2023reflexion}.
Implementation details are provided in Appendix~\ref{sec:experimental}.

\subsection{Effectiveness}

\input{table/compare-to-sota}

\noindent\textbf{\method outperforms reasoning-based policies across models and frameworks.}
As shown in Table~\ref{tab:compare-planning}, \method consistently outperforms single step (reasoning-based policy), beam search (wider reasoning), and lookahead (shallow planning) across all datasets.
Accuracy under single step collapses as the horizon increases; beam search postpones but does not prevent this degradation.
Introducing limited lookahead improves performance, yet remains substantially weaker than full value propagation.
Importantly, these improvements are orthogonal to model scale.
\method with smaller backbones often matches or exceeds reasoning-based policies using substantially larger models, indicating that planning capability cannot be recovered by stronger reasoning alone.
The same conclusion holds across agent frameworks: \method yields consistent gains on both ToG and PoG, despite architectural differences.

\noindent\textbf{\method is competitive with or superior to specialized KGQA and MCTS-based approaches.}
As shown in Table~\ref{tab:compare-sota}, \method matches or surpasses prior MCTS and KGQA methods across all three benchmarks.
Notably, many existing approaches depend on task-specific training or carefully engineered search heuristics, whereas \method achieves comparable or better performance using a single, unified planning mechanism based on explicit lookahead and value propagation.
This suggests that principled planning mechanisms, rather than task-specific engineering, can serve as a general and effective foundation for graph-based reasoning and related structured decision-making problems.

\subsection{Planning Dynamics}

\input{table/diagnosis-summary}

\noindent\textbf{\method fundamentally improves long-horizon decision stability and recoverability.}
We evaluate decision behavior using three mechanism-level metrics (Table~\ref{tab:diagnosis-summary}): first-step trap selection, position of the first error, and recovery probability.
Single step exhibit systematic early myopia, selecting locally attractive but globally harmful actions and almost never recovering after an initial mistake.
Beam search extends this paradigm by increasing width, but remains driven by local scoring and thus commits prematurely to suboptimal prefixes.
Shallow lookahead mitigates these effects by incorporating limited future evaluation, but its benefits saturate quickly.
In contrast, \method consistently dominates all three metrics.
By propagating trajectory-level outcomes backward through the search tree, it suppresses early traps, postpones irreversible errors, and enables effective recovery.
This demonstrates that explicit planning mechanisms do not merely improve accuracy, but replace brittle reasoning-based decision making with stable and revisable long-horizon behavior.

\noindent\textbf{\method shifts the dominant failure mode from myopic commitment to exploration and termination limits.}
To clarify the scope of this improvement, we analyze residual failures under different decision mechanisms (Appendix~\ref{sec:failure-patterns}).
For single step and beam search, failures are dominated by premature commitment: actions are chosen for local plausibility rather than long-term feasibility, leading to irreversible deviation early in execution.
Limited lookahead reduces but does not eliminate this pathology.
\method directly removes this source of failure by evaluating actions through their induced future trajectories and propagating long-term outcomes backward to shape early decisions.
As a result, the dominant failure mode changes.
Under \method, errors primarily arise from insufficient exploration or imperfect termination control, such as looping or stopping before reaching a solution.
These failures are no longer driven by incorrect early commitments, but by constraints on search coverage and horizon management.
They therefore lie beyond what improved reasoning alone can resolve and point to distinct challenges for future planning-oriented agent design.

\subsection{Efficiency Analysis}

\begin{figure}[!t]
    \centering
    \includegraphics[width=\linewidth]{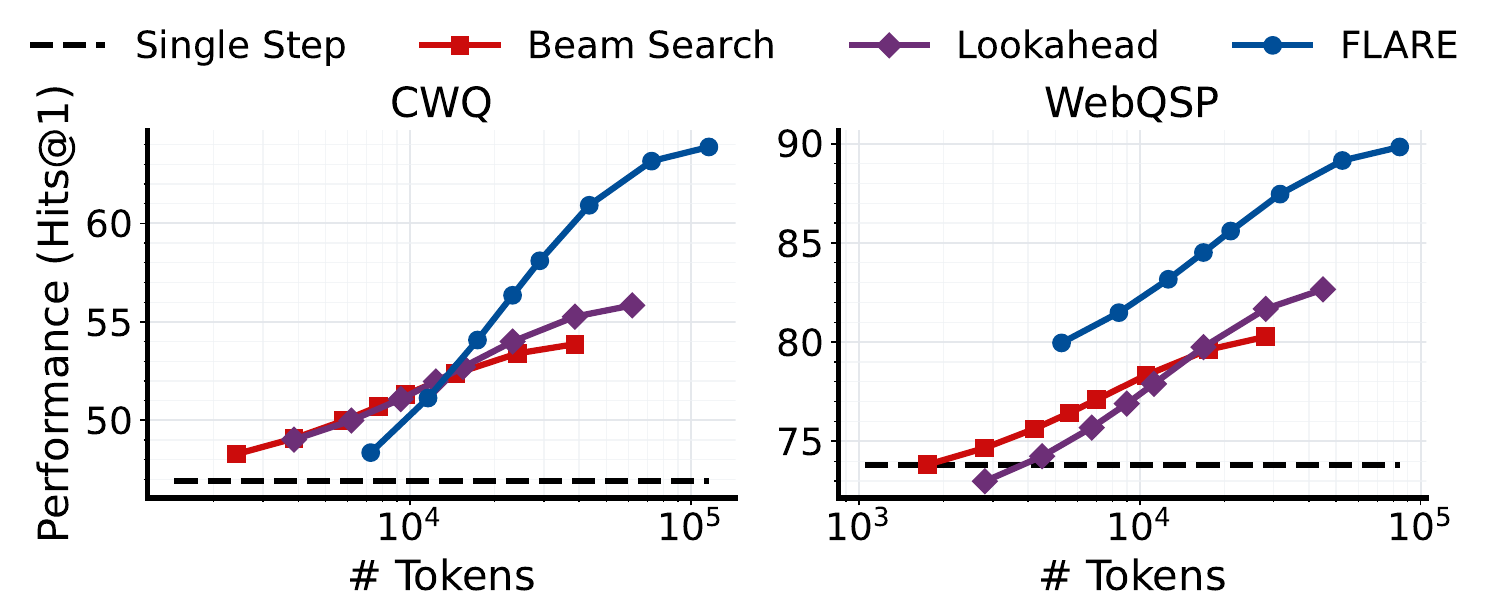}
    \caption{
        \textbf{Performance-budget trade-offs} of different planning strategies on CWQ and WebQSP.
        Results are plotted against total token budget, reflecting the inherent computational costs of different planning strategies.
        \method demonstrates superior scalability compared to beam search and lookahead baselines.
    }
    \label{fig:performance-budget-tradeoff}
\end{figure}

\noindent\textbf{\method converts additional computation into effective planning rather than brute-force search.}
Figure~\ref{fig:performance-budget-tradeoff} reports task performance as a function of total token budget on CWQ and WebQSP.
Single step incur minimal cost but show virtually no benefit from increased budget.
Beam search and lookahead initially improve with additional computation, but quickly saturate far below the performance of \method.
In contrast, \method exhibits sustained performance gains as the budget increases.
Although it incurs a higher cost per decision step, it achieves substantially higher accuracy at comparable or even lower total token budgets and continues to scale with additional computation.
This behavior indicates that \method allocates computation more productively, translating extra budget into improved long-horizon planning rather than into redundant local exploration.

\input{table/ablation-summary}

\noindent\textbf{Action pruning and trajectory memory provide complementary efficiency benefits.}
We ablate two efficiency-oriented components of \method, i.e., Action Pruning and Trajectory Memory, to isolate their roles under fixed computational budgets (Table~\ref{tab:ablation-summary}).
\textit{(1) Action Pruning:}
Removing Action Pruning causes a substantial performance drop at the default budget.
Recovering the full model's accuracy requires nearly three times more tokens, indicating that pruning improves efficiency by directing computation toward high-value actions rather than by merely lowering cost.
\textit{(2) Trajectory Memory:}
Removing Trajectory Memory markedly increases token usage with little change in accuracy.
Conversely, under matched budgets, performance deteriorates noticeably.
This pattern shows that trajectory memory improves efficiency by avoiding redundant evaluations of similar futures, thereby freeing budget for more informative exploration.


\subsection{Planning Behavior in Tool-Use Environment}

\noindent\textbf{\method exhibits distinct long-horizon planning behavior in tool-use environments.}
To evaluate whether our conclusions generalize beyond graph-based inference, we test \method in ALFWorld \citep{shridhar2021alfworld} via AgentGym \citep{xi2025agentgym}, where agents must complete long-horizon tasks by composing sequences of tool calls.
Despite different state and action representations, early decisions similarly induce irreversible failure modes.
As shown in Figure~\ref{fig:alfworld}, reasoning-based policies attain limited success and deviate from optimal trajectories early.
Strengthening planning mechanisms yields monotonic improvements.
\method achieves the highest success rates and substantially postpones the first error, demonstrating that explicit future evaluation and value propagation generalize beyond KGQA.
This provides cross-domain evidence that long-horizon failure arises from the decision paradigm itself, rather than from domain-specific structure.

\begin{figure}[!t]
    \centering
    \includegraphics[width=\linewidth]{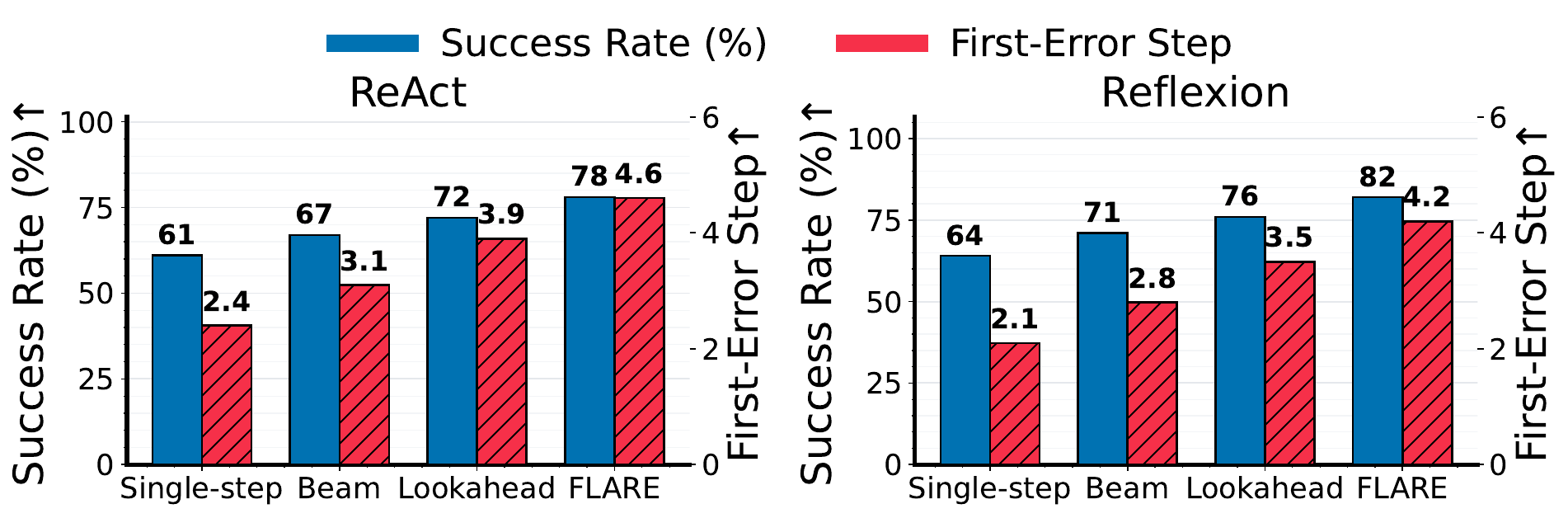}
    \caption{
        \textbf{Planning behavior on ALFWorld (tool-use).}
        We report the success rate and the first-error position for different planning strategies implemented on top of ReAct and Reflexion.
        \method consistently achieves higher success rates and higher first-error positions, demonstrating more stable long-horizon planning behavior beyond knowledge graph traversal.
    }
    \label{fig:alfworld}
\end{figure}

\input{table/compare-rl}

\noindent\textbf{\method clarifies the mechanistic distinction between online planning and reinforcement learning.}
Table~\ref{tab:compare-rl} categorizes agents by three planning components: explicit lookahead, backward value propagation, and commitment strategy.
Performance differences align with these mechanisms rather than with the training paradigm itself.
Prompt-based and RL-based agents, which lack explicit lookahead and value propagation at test time, remain vulnerable to early irreversible errors, even when augmented with trajectory sampling.
Planning-based methods improve robustness by explicitly evaluating future outcomes, and \method further benefits from receding-horizon commitment, enabling continual revision of early decisions.
Consequently, \method achieves the highest success rate on ALFWorld, reinforcing that coherent long-horizon behavior requires explicit planning mechanisms beyond local reasoning.

%% file: table/compare-to-sota.tex
\begin{table}[!t]
    \caption{
        \textbf{Comparison with state-of-the-art MCTS and KGQA methods.}
        \method achieves competitive or superior performance compared to prior approaches across all datasets.
    }
    \label{tab:compare-sota}
    \resizebox{\linewidth}{!}{
        \begin{tabular}{lccc}
            \toprule
            \textbf{Method}                     & \textbf{CWQ}  & \textbf{WebQSP} & \textbf{GrailQA} \\ \midrule
            IO Prompt \citep{brown2020language} & 37.6          & 63.3            & 29.4             \\
            CoT \citep{wei2022chain}            & 38.8          & 62.2            & 28.1             \\
            SC \citep{wang2023selfconsistency}  & 45.4          & 61.1            & 29.6             \\ \midrule
            ToT \citep{yao2023tree}             & 64.3          & 73.5            & 44.2             \\
            RAP \citep{hao2023reasoning}        & 68.2          & 75.6            & 49.7             \\
            LATS \citep{zhou2024language}       & 70.3          & 76.5            & 58.1             \\ \midrule
            KB-BINDER \citep{li2023few}         & -             & 74.4            & 58.5             \\
            ToG 2.0 \citep{ma2024think}         & -             & 81.1            & -                \\
            RoG \citep{luo2024reasoning}        & 62.6          & 85.7            & -                \\
            ToG \citep{sun2024thinkongraph}     & 67.6          & 82.6            & 81.4             \\
            PoG \citep{chen2024plan}            & 75.0          & 87.3            & \ul{84.7}        \\
            GCR \citep{luo2025graphconstrained} & \ul{75.8}     & \ul{92.2}       & -                \\
            DP \citep{ma2025deliberation}       & 74.6          & 87.5            & -                \\
            RwT \citep{shen2025reasoning}       & 72.4          & 87.0            & -                \\
            ProgRAG \citep{park2025prograg}     & 73.3          & 90.4            & -                \\
            DoG \citep{ma2025debate}            & 56.0          & 91.0            & 80.0             \\
            PathMind \citep{liu2025pathmind}    & 70.7          & 89.5            & -                \\
            \midrule
            \method (ToG)                       & 73.6          & 90.4            & 86.7             \\
            \method (PoG)                       & \textbf{78.8} & \textbf{93.9}   & \textbf{92.0}    \\
            \bottomrule
        \end{tabular}
    }
\end{table}

%% file: table/diagnosis-summary.tex
\begin{table}[!t]
    \centering
    \caption{
        \textbf{Mechanism-level comparison of long-horizon planning behavior.}
        We report metrics that explicitly quantify decision dynamics, including trap selection at the first step, the average position of the first error, recovery probability after the first error, and the correct decision prefixes.
        Aggregated results across datasets show that \method improves all aspects of planning behavior, supporting more stable long-horizon decision-making.
        The complete breakdown is provided in Table~\ref{tab:diagnosis-full}.
    }
    \label{tab:diagnosis-summary}
    \resizebox{\linewidth}{!}{
        \begin{tabular}{lccc}
            \toprule
            \multirow{2}{*}{\bf Method} & \textbf{Constructed}           & \textbf{First-Error}     & \textbf{Recovery@}              \\
                                        & \textbf{Trap@1 $\downarrow$} & \textbf{Step $\uparrow$} & \textbf{First-Error $\uparrow$} \\ \midrule
            Single Step                 & 55.6\%                       & 1.6                      & 5.4\%                           \\
            Beam Search                       & 71.9\%                       & 2.0                      & 11.4\%                          \\
            Lookahead                   & 23.6\%                       & 2.8                      & 22.4\%                          \\
            \method                       & \textbf{17.8\%}              & \textbf{3.2}             & \textbf{29.7\%}                 \\
            \bottomrule
        \end{tabular}
    }
    \vspace{-10pt}
\end{table}

%% file: table/ablation-summary.tex
\begin{table}[!t]
    \centering
    \caption{
        \textbf{Ablation of efficiency-oriented design choices in \method.}
        Performance-aligned and token-aligned variants isolate the computational and accuracy impact of removing action pruning and trajectory memory, respectively, demonstrating their complementary roles in improving planning efficiency. The full results are reported in Table \ref{tab:ablation-full}.
    }
    \label{tab:ablation-summary}
    \begin{tabular}{l | cc}
        \toprule
        \textbf{Methods}             & \textbf{Hits@1} & \textbf{\# Tokens}            \\
        \midrule
        \method (Full)               & 74.9            & 21k                           \\
        \midrule
        \multicolumn{3}{l}{\textbf{\textit{(a) Action Pruning (performance-aligned)}}} \\
        \midrule
        w/o Pruning                  & 69.5            & 15k                           \\
        w/o Pruning (Hits@1-Aligned) & 75.4            & 61k                           \\
        \midrule
        \multicolumn{3}{l}{\textbf{\textit{(b) Trajectory Memory (token-aligned)}}}    \\
        \midrule
        w/o Memory                   & 74.6            & 34k                           \\
        w/o Memory (Token-Aligned)   & 72.7            & 22k                           \\
        \bottomrule
    \end{tabular}
\end{table}

%% file: table/compare-rl.tex
\begin{table}[!t]
    \centering
    \caption{
        \textbf{Comparison of planning mechanisms and performance on ALFWorld.}
        We report success rates for prompt-based, RL-based, and planning-based agents.
        ``Part.'' denotes partial support and ``Rec.'' denotes receding-horizon commitment.
        \method is the only method that instantiates all three components and achieves the best performance.
    }
    \label{tab:compare-rl}
    \resizebox{\linewidth}{!}{
        \begin{tabular}{l| ccc | c}
            \toprule
            Method             & Lookahead & Value Prop. & Commit & Success (\%)     \\
            \midrule
            \multicolumn{5}{l}{\textit{\textbf{(a) Prompt-based Methods}}}           \\
            \midrule
            ReAct              & $\times$  & $\times$    & Rec.   & 61.3             \\
            ReAct + Beam       & $\times$  & $\times$    & Full   & 66.7             \\
            Reflexion          & $\times$  & $\times$    & Rec.   & 64.2             \\
            \midrule
            \multicolumn{5}{l}{\textit{\textbf{(b) Reinforcement Learning Methods}}} \\
            \midrule
            RL Policy (Greedy) & $\times$  & $\times$    & Rec.   & 69.4             \\
            RL Policy (Think)  & Part.     & Part.       & Full   & 77.4             \\
            \midrule
            \multicolumn{5}{l}{\textit{\textbf{(c) Planning-based Methods}}}         \\
            \midrule
            ReAct + MCTS       & Tree      & Back        & Full   & 73.5             \\
            ReAct + \method    & Tree      & Back        & Rec.   & \textbf{77.8}    \\
            \bottomrule
        \end{tabular}
    }
\end{table}

%% file: section/related_work.tex
\section{Related Work}


\noindent\textbf{Reasoning in LLM-based Agents.}
Multiple works have focused on improving the reasoning capability of LLM-based agents.
Chain-of-Thought \citep{wei2022chain} and self-consistency \citep{wang2023selfconsistency} improve multi-step reasoning via intermediate derivations and sampling.
Tree- and graph-structured variants, such as Tree-of-Thoughts \citep{yao2023tree,long2023large} and Graph-of-Thoughts \citep{besta2024graph}, enable branching and limited backtracking.
Agent frameworks including ReAct \citep{yao2022react}, Reflexion \citep{shinn2023reflexion}, and AutoGen \citep{wu2024autogen} integrate reasoning with environment interaction.
Despite their diversity, these approaches share a common paradigm: actions are selected step by step using local heuristics.


\noindent\textbf{Planning in LLM-based Agents.}
Recent work incorporates explicit search and planning into LLM agents.
In KGQA, Think-on-Graph \citep{sun2024thinkongraph} and Plan-on-Graph \citep{chen2024plan} formulate reasoning as graph traversal with beam search or plan revision.
Other approaches treat reasoning as search over structured state spaces \citep{ma2024think,markowitz2024tree,besta2024graph}.
Several methods further integrate LLMs with MCTS-style planning \citep{hao2023reasoning,zhang2024rest,long2025enhancing,wang2025damr,zhang2024rest,yoon2025monte}, introducing lookahead simulation and value backpropagation and achieving empirical gains on multi-step tasks.


\noindent\textbf{Offline vs. Online Planning.}
Another line of work trains agents with RL to optimize long-horizon objectives (offline planning) \citep{xi2025agentgymrl,guo2025deepseek,jin2025search,xu2025towards}, effectively amortizing planning into model parameters \citep{sutton1998reinforcement,wang2025reinforcement}.
At inference time, planning relies on autoregressive rollouts and sampling, rather than explicit revision of earlier actions.
In contrast, online planners (e.g., \method) explicitly evaluate and revise decisions during execution.


\noindent\textbf{Our Position.}
While recent studies report that LLM agents struggle with long-horizon tasks \citep{xi2025agentgymrl,erdogan2025plan,hu2025hiagent,gao2025beyond}, these limitations are typically treated as empirical observations.
We provide a planning-centric and theoretical account by modeling LLM reasoning as a step-wise greedy policy and analyzing its consequences.
This yields a mechanism-level explanation of error accumulation and establishes a formal distinction between reasoning and planning.

%% file: section/conclusion.tex


\section{Conclusion}

We show that long-horizon failures of LLM-based agents arise from a conceptual gap: step-by-step reasoning is not planning.
Even with explicit states and deterministic transitions, reasoning-based policies make irreversible early commitments due to their reliance on local evaluation.
To address this limitation, we propose \method, a minimal planning mechanism with explicit lookahead, backward value propagation, and limited commitment, enabling agents to evaluate future consequences and revise early decisions.
Experiments across tasks, agent frameworks, and model backbones demonstrate consistent improvements in long-horizon decision making.

\section*{Impact Statement}

This work highlights a fundamental distinction between reasoning and planning in LLM-based agents and argues that long-horizon reliability requires explicit planning mechanisms beyond step-wise reasoning.
By emphasizing future-aware evaluation and value propagation, our findings provide a principled foundation for building more reliable and interpretable agentic systems for multi-step decision problems \citep{liu2025advances}.
Beyond knowledge-intensive reasoning, this perspective applies broadly to domains requiring sustained decision making, including tool-augmented assistants \citep{qin2023toolllm,guo2024stabletoolbench}, workflow automation \citep{xiao2024flowbench}, program synthesis \citep{zhang2024codeagent}, and embodied systems \citep{duan2022survey}.

\noindent\textbf{Potential Benefits.}
A primary benefit of this work is improved reliability under long decision horizons \citep{yang2025mla}.
Explicit lookahead reduces irreversible early commitments, while value propagation allows downstream outcomes to systematically influence earlier choices.
These mechanisms yield decision processes that are more transparent than purely step-wise heuristics, facilitating debugging, auditing, and scientific analysis \citep{liu2025advances}.
As \method is a modular planning component, it can be integrated into existing agent frameworks without extensive task-specific engineering.
More broadly, the controlled diagnostic methodology introduced here may encourage more rigorous evaluation of agent planning capabilities by separating decision-mechanism failures from those caused by partial observability or unreliable feedback.

\noindent\textbf{Risks and Misuse Considerations.}
Stronger long-horizon planning also increases agents’ capacity to execute complex and sustained behaviors in high-impact domains.
Without appropriate safeguards \citep{xiang2024guardagent}, this may amplify risks of unintended or misaligned behavior \citep{liu2023trustworthy}, particularly when evaluation signals are imperfect proxies for human intent.
More capable planning may enable exploitation of reward-model weaknesses, pursuit of harmful long-term objectives, or unsafe tool chaining.
When external evaluators are used as planning signals, agents may additionally learn to exploit evaluator blind spots (reward hacking) \citep{gu2024survey}.
These risks highlight the need to accompany advances in planning with robust monitoring, sandboxing, and evaluation protocols.

\noindent\textbf{Limitations and Scope.}
Our empirical study focuses on controlled environments with deterministic transitions and structured action spaces, and does not address stochastic dynamics, partial observability, non-stationarity, or learning environment models from interaction \citep{busemeyer2010cognitive}.
Moreover, our approach assumes access to an evaluative signal at planning time, which may be noisy, costly, or unavailable in practice.
While KGQA provides a clean sandbox for isolating decision mechanisms, it does not establish performance in open-ended environments with less explicit state representations and feedback.

\noindent\textbf{Future Directions.}
Several directions are important for extending this work responsibly.
First, applying explicit planning to stochastic or partially observed environments \citep{xi2025agentgym} may require combining lookahead with belief-state inference \citep{friston2016active}, uncertainty-aware evaluation \citep{wagner2017path}, or learned world models \citep{hafner2025mastering}.
Second, integrating learning-based value estimation with explicit planning \citep{schrittwieser2020mastering} may amortize long-horizon reasoning while preserving future-aware decision making.
Third, deploying planning-based agents in open-ended settings calls for embedding safety constraints into planning objectives, strengthening evaluator robustness, and developing benchmarks for goal misgeneralization and safe tool use \citep{liu2023trustworthy}.
We encourage future research to jointly advance planning capability and safety to ensure that increased autonomy is matched by stronger alignment and oversight.

%% file: appendix/related_work.tex
\section{Additional Related Works}
\label{sec:additional_related_work}

\subsection{Classical Planning and MCTS}

Classical planning tasks can be generally formulated as the generation of action sequences that can transforms an initial state into a desired goal state. Foundational research in classical planning like STRIPS~\cite{fikes1971strips} and PDDL~\cite{aeronautiques1998pddl} assumes fully observable and deterministic environments with pre-defined state transition functions and explicit goal conditions. Following this setting, classical planning is typically formulated as a graph search problem~\cite{blum1997fast}. Early approaches mostly rely on systematic search algorithms, such as forward and backward chaining~\cite{pednault1989adl,ghallab2004automated}. Recently, informative heuristics like relaxed planning graphs and landmarks are integrated into planners to reduce search space and enable scalable planning~\cite{richter2010lama,hoffmann2001ff,bonet2001planning}.

Despite these advances, classical planning methods often struggle with extremely large state space, sparse reward and long-term planning~\cite{haslum2019introduction}. These challenges motivated the development of alternative search paradigms, while MCTS has emerged as a powerful alternative option. MCTS is originally developed for decision-making in game-playing~\cite{coulom2006efficient}. Instead of relying on explicit heuristics, MCTS incrementally builds a search tree using randomized rollouts and propagates value estimates backward to guide future exploration, which is usually bounded by Upper Confidence Bound (UCB) to balance exploration and exploitation. These domain-independent properties facilitate the application of MCTS to classical planning, especially the planning tasks in unseen domains~\cite{swiechowski2023monte}. MCTS has been successfully applied to domain-specific planning problems by incorporating hierarchical planner~\cite{shao2021107067} and domain knowledge~\cite{eiffert2020path,dam2022monte,hennes2015trajectory,zhao2025controllable}. More recently, to further implement domain-independent planning, researchers have attempted to combine MCTS with multi-armed bandit algorithm~\cite{asai2024extreme,wissow2024scale} for scalable long-term planning.
These empirical success of MCTS in classical planning has demonstrated the potential of MCTS in improving LLMs' reasoning ability in unseen environments.

\subsection{Diagnosis of LLM-based Agent}


The diagnostic study of LLM-based agents \citep{ma2025autodata,zhang2025agentrouter,li2026longda,thapaliya2025semantic} has evolved from simple accuracy metrics to a multi-dimensional investigation of internal behavioral pathologies. Early research in this domain was primarily concerned with the veracity of generated content, focusing on the taxonomy and mitigation of hallucinations~\cite{ji2023survey,bang2023multitask}. As the field transitioned toward complex problem-solving, the diagnostic lens shifted to reasoning failures. While Chain-of-Thought initially appeared to unlock latent logic~\cite{wei2022chain}, subsequent diagnostics revealed significant compositional gaps, where models struggle to integrate multiple simple reasoning steps into a coherent proof~\cite{press2023measuring}. This brittleness was further highlighted by the discovery of the Reversal Curse~\cite{berglund2023reversal}, which demonstrated that LLMs fail to generalize basic logical identities, exposing a fundamental lack of bidirectional understanding.

Beyond logical errors, recent literature has identified shortcut reasoning (or lazy reasoning) as a pervasive behavioral bias. Diagnostics have shown that LLMs often bypass rigorous deduction by exploiting statistical artifacts in the prompt, such as positional biases which is exemplified by the Lost in the Middle phenomenon \cite{liu2024lost,zhang2024found}, or the tendency to rely on label frequency rather than context \cite{reif2024beyond,mckenna2023sources}. \citet{dziri2023faith} argued that many observed successes in multi-step reasoning are not the result of systematic state-space search but are instead lucky pattern matches. This skepticism was reinforced later by studies on self-correction, where researchers found that LLMs often struggle to self-diagnose and fix their own errors without external feedback, often deteriorating the solution quality through hallucinated corrections \cite{huang2023large,pan2023automatically,tsui2025self}.

Recently researches has begun to explore System 2 planning and reinforcement learning through search-based architectures~\cite{yoon2025monte,li2025system}, such as DeepSeek-R1 \cite{guo2025deepseek} and Search-R1 \cite{jin2025search}. Most recently, studies have started diagnosing Myopic Planning, the tendency of models to prioritize immediate tokens over long-term goal feasibility \cite{ma2024non,laban2025llms,xi2025agentgymrl}, and the degree to which LLMs possess internal world models to anticipate the consequences of their actions \cite{hu2023language,guo2025sample,levy2025worldllm}. 

In contrast to these lines of work, we focus on a different object of analysis: the agent's decision process over action sequences.
Rather than studying whether intermediate reasoning steps are logically valid or faithful, we ask whether the induced action-selection mechanism can support coherent behavior over long horizons.
We adopt a planning-centric perspective that formalizes LLM reasoning as a step-wise greedy policy, and analyze its consequences at the level of trajectories, commitments, and recoverability, allowing us to theoretically distinguish reasoning from planning capabilities.

%% file: appendix/theory.tex
\section{Suboptimality of Truncated Lookahead}
\label{sec:suboptimality}

While Proposition~\ref{prop:lookahead} establishes the strict superiority of $k \ge 1$ lookahead over step-wise greedy and beam search policies,
practical LLM agents operate under finite computational budgets, resulting in an effective lookahead depth $k$ that is often much smaller than the episode horizon $H$.
We now characterize the worst-case suboptimality of a truncated $k$-lookahead policy when critical rewards are systematically delayed beyond the lookahead horizon. We provide the proof in Appendix \ref{sec:proof}.

\paragraph{The $k$-Lookahead Policy.}
For a given lookahead depth $k \in \{1,\dots,H-1\}$, define the truncated lookahead value
\begin{equation}
    \tilde{Q}_k(s,a)
    =
    \max_{\pi'}
    \sum_{t=0}^{k-1} \hat{r}(s_t,a_t,s_{t+1}),
\end{equation}
where $s_0=s$, $a_0=a$, $s_{t+1}=T(s_t,a_t)$, and $\pi'$ denotes a policy over the subsequent $k-1$ steps.
All rewards beyond depth $k$ are treated as zero.
The $k$-lookahead policy $\pi_k$ selects actions according to
\begin{equation}
    \pi_k(s) \in \arg\max_{a \in \mathcal{A}(s)} \tilde{Q}_k(s,a),
\end{equation}
with ties broken by a fixed step-wise surrogate score $\hat{u}(s,a)$.

\begin{proposition}[Worst-Case Suboptimality of $k$-Lookahead]
    \label{prop:suboptimality}
    Let $H$ be the episode horizon and let $k \in \{1,\dots,H-1\}$.
    Let
    \begin{equation}
        R_{\max} = \max_{s,a,s'} |\hat{r}(s,a,s')|
    \end{equation}
    denote the maximum magnitude of any single-step reward.
    Then there exists a class of deterministic environments $\mathcal{G}$ such that
    the additive suboptimality gap of the $k$-lookahead policy $\pi_k$ satisfies
    \begin{equation}
        R(\tau^*) - R(\tau(\pi_k))
        \;\ge\;
        R_{\max} \cdot \left\lfloor \frac{H-1}{k+1} \right\rfloor.
    \end{equation}
    Consequently, in the worst case, the performance gap of truncated lookahead grows linearly with the ratio $H/(k+1)$.
\end{proposition}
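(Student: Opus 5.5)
The construction repeats a local trap in blocks of length $k+1$. Rewards are set so that, at each block start, the truncated value $\tilde Q_k$ strictly prefers a ``bait'' action. The true return, however, is delayed to depth $k+1$, just beyond the lookahead window.

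Fix a magnitude $R_{\max}>0$ and let $m=\lfloor (H-1)/(k+1)\rfloor$. If $m=0$ the claim is trivial, since the gap is always nonnegative, so assume $m\ge1$. Build a chain of decision states $d_0,d_1,\dots,d_m$, with $d_j$ reached at time $j(k+1)$. At each $d_j$ there are two actions:
\begin{itemize}
\item a \emph{bait} action $b$, leading into a corridor of $k+1$ forced steps that ends at $d_{j+1}$;
\item a \emph{patient} action $p$, leading into a parallel corridor of $k+1$ forced steps that also ends at $d_{j+1}$.
\end{itemize}
After $d_m$, use forced zero-reward steps until time $H$. The horizon suffices because $m(k+1)\le H-1$.

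Rewards are assigned as follows. In the bait corridor, the first transition gets reward $\epsilon$ with $0<\epsilon<R_{\max}$, and all later transitions get $0$. In the patient corridor, the first $k$ transitions get $0$, and the $(k+1)$-st transition, into $d_{j+1}$, gets $R_{\max}+\epsilon$. To keep $R_{\max}$ as the true maximum magnitude, I would rather use: bait first step $=\epsilon$ with $\epsilon$ tiny, and patient last step $=R_{\max}$. Then $\max|\hat r|=R_{\max}$ exactly.

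Next, verify the behavior of $\pi_k$ at each $d_j$. The window $\tilde Q_k(d_j,\cdot)$ covers transitions $0,\dots,k-1$ of the corridor. The patient reward sits at transition index $k$, so it is invisible, giving $\tilde Q_k(d_j,p)=0$. Meanwhile $\tilde Q_k(d_j,b)=\epsilon>0$. Hence $\pi_k$ strictly chooses $b$, and no tie-breaking by $\hat u$ is needed. Inside corridors all actions are forced, so $\pi_k$ has no choice there. One must also check that decision states inside a window do not leak the reward: any branching inside the window starts a fresh corridor whose large reward is again $k+1$ steps past $d_{j+1}$, which lies outside the window. So the computation stands.

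The returns then compare as follows.
\begin{itemize}
\item $\pi_k$ earns $R(\tau(\pi_k))=m\epsilon$.
\item The always-patient policy earns $mR_{\max}$.
\end{itemize}
Therefore
\[
R(\tau^*)-R(\tau(\pi_k))\ \ge\ m(R_{\max}-\epsilon).
\]

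The main obstacle is obtaining the exact bound $mR_{\max}$ rather than $m(R_{\max}-\epsilon)$. I would try to fix this by giving the bait its own reward at a position beyond the window, to avoid the $\epsilon$ loss. Concretely, make both actions give $0$ inside the window, break the tie via $\hat u(d_j,b)>\hat u(d_j,p)$ (this is allowed by the stated tie-breaking rule), and give the bait corridor $0$ at transition $k$. Then $\pi_k$ picks $b$ by tie-break and earns $0$, while the optimal policy earns $mR_{\max}$. This yields the gap $\ge R_{\max}\lfloor (H-1)/(k+1)\rfloor$ exactly. It also shows the stated linear growth in $H/(k+1)$, with a class of environments indexed by $R_{\max}$, $H$, and $k$.
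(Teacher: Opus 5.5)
Your final construction matches the paper's proof and is correct: $\lfloor (H-1)/(k+1)\rfloor$ blocks of two parallel $(k+1)$-step corridors, zero reward inside the lookahead window, reward $R_{\max}$ on the good corridor's $(k+1)$-th transition, and the tie broken toward the decoy via $\hat{u}$. The $\epsilon$-detour is unnecessary, though if you want to avoid relying on tie-breaking, a compensating $-\epsilon$ at transition $k$ of the bait corridor gives a strict preference for the bait and still yields the exact gap $mR_{\max}$.
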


Proposition~\ref{prop:suboptimality} establishes that the failure mode observed in Proposition~\ref{prop:greedy} can be generalized to arbitrarily deep lookahead policies, provided the crucial rewards are placed just beyond the lookahead horizon $k$. The bound demonstrates that the gap between the true optimal return and the return achieved by $\pi_k$ grows linearly with the ratio of the total horizon to the lookahead depth.

\section{Proofs}
\label{sec:proof}

\subsection{Proof of Proposition \ref{prop:greedy}}

\begin{proof}
    Fix any $H \ge 2$ and $M > 0$.
    We construct a deterministic environment as follows.

    Let the initial state be $s_0 := s$.
    Define two successor states $s_a$ and $s_b$ such that
    \begin{equation}
        T(s_0,a) = s_a,
        \qquad
        T(s_0,b) = s_b.
    \end{equation}
    At both $s_a$ and $s_b$, only a single action $a_\bot$ is available, which transitions to an absorbing terminal state $s_\bot$.
    The horizon is padded by remaining in $s_\bot$ with zero reward to ensure total length $H$.

    Define the reward function as
    \begin{align}
        \hat{r}(s_0,a,s_a)         & = 0, \\
        \hat{r}(s_0,b,s_b)         & = 0, \\
        \hat{r}(s_a,a_\bot,s_\bot) & = 0, \\
        \hat{r}(s_b,a_\bot,s_\bot) & = M,
    \end{align}
    and let all other rewards be zero.

    Since $\hat{u}(s_0,a) > \hat{u}(s_0,b)$, the step-wise greedy policy $\pi_{\text{greedy}}$ selects action $a$ at $s_0$, yielding
    \(
    R(\tau(\pi_{\text{greedy}})) = 0.
    \)
    In contrast, the policy that selects $b$ at $s_0$ achieves return $M$.
    As $M$ is arbitrary, the suboptimality gap can be made arbitrarily large.
\end{proof}

\subsection{Proof of Proposition \ref{prop:beam}}

\begin{proof}
    Fix $B$, $H \ge 2$, and $M > 0$.
    Let the initial state be $s_0$.
    Define $B+1$ decoy actions $a_1,\dots,a_{B+1}$ and one good action $g$ such that
    \begin{equation}
        T(s_0,g) = s_g,
        \qquad
        T(s_0,a_j) = s_j \quad (j=1,\dots,B+1).
    \end{equation}

    At each $s_j$, only a terminal action $a_\bot$ is available, yielding zero reward.
    At $s_g$, the same terminal action yields reward $M$.
    Formally,
    \begin{equation}
        \hat{r}(s_j,a_\bot,s_\bot) = 0,
        \qquad
        \hat{r}(s_g,a_\bot,s_\bot) = M.
    \end{equation}

    Define the surrogate scores as
    \begin{equation}
        \hat{u}(s_0,a_j) = 1 \quad (j=1,\dots,B+1),
        \qquad
        \hat{u}(s_0,g) = 0,
    \end{equation}
    and $\hat{u}=0$ elsewhere.

    At depth $1$, the prefix corresponding to $g$ has accumulated score $S_1=0$,
    while each decoy prefix has score $S_1=1$.
    Since there are $B+1$ decoy prefixes with strictly higher scores,
    any beam-search policy retaining only the top $B$ prefixes must prune the optimal prefix.
    Once pruned, the optimal trajectory cannot be recovered, and the final return is zero.

    As the optimal return $M$ can be arbitrarily large, the result follows.
\end{proof}

\subsection{Proof of Proposition \ref{prop:lookahead}}

\begin{proof}
    We reuse the environment construction from Proposition~\ref{prop:greedy}.
    At the initial state $s_0$, there exist two actions $a$ and $b$ such that
    $\hat{u}(s_0,a) > \hat{u}(s_0,b)$.
    Action $a$ leads deterministically to a successor state $s_a$ with zero future reward,
    while action $b$ leads to $s_b$ from which a reward $M > 0$ is obtained at the next step.

    A step-wise policy that selects actions based solely on $\hat{u}$ necessarily chooses $a$
    and receives total return $0$.
    The same holds for any finite-width beam search ranked only by accumulated surrogate scores,
    as shown in Proposition~\ref{prop:beam}.

    Now consider a lookahead-based policy with horizon $k=1$.
    At state $s_0$, the policy evaluates each candidate action by simulating one-step rollouts:
    \begin{align}
        \text{Value}(a) & = \hat{r}(s_0,a,s_a) + \hat{r}(s_a,a_\bot,s_\bot) = 0, \\
        \text{Value}(b) & = \hat{r}(s_0,b,s_b) + \hat{r}(s_b,a_\bot,s_\bot) = M.
    \end{align}
    Since $M > 0$, the lookahead policy correctly selects action $b$ at $s_0$,
    thereby achieving the optimal return $M$.

    As $M$ can be arbitrarily large, the separation between step-wise decision-making and lookahead planning is strict.
\end{proof}

\subsection{Proof of Proposition \ref{prop:suboptimality}}

\begin{proof}
    Fix $H$, $k \in \{1,\dots,H-1\}$, and $R_{\max} > 0$.
    We construct a deterministic environment in which all non-zero rewards are placed strictly beyond the lookahead horizon.

    Let
    \begin{equation}
        N = \left\lfloor \frac{H-1}{k+1} \right\rfloor.
    \end{equation}
    The state space contains a sequence of states
    $\{s^{(0)}, s^{(1)}, \dots, s^{(N)}\}$,
    where $s^{(0)}$ is the initial state and $s^{(N)}$ is terminal.
    For each $i \in \{0,\dots,N-1\}$, the agent at state $s^{(i)}$ can choose between two actions:
    a good action $g$ and a decoy action $d$.

    From state $s^{(i)}$, both actions deterministically induce a chain of $k$ transitions with zero reward.
    Specifically,
    \begin{equation}
        \hat{r}(s^{(i)}, g, s_{g,1}) = \hat{r}(s^{(i)}, d, s_{d,1}) = 0,
    \end{equation}
    and for all $j=1,\dots,k-1$,
    \begin{equation}
        \hat{r}(s_{g,j}, a, s_{g,j+1}) = \hat{r}(s_{d,j}, a, s_{d,j+1}) = 0.
    \end{equation}
    At the $(k+1)$-th transition, which lies strictly outside the lookahead horizon,
    the two paths diverge:
    the decoy path yields zero reward and transitions to $s^{(i+1)}$,
    while the good path yields reward $R_{\max}$ and also transitions to $s^{(i+1)}$.

    We define the step-wise surrogate function such that
    \begin{equation}
        \hat{u}(s^{(i)}, d) > \hat{u}(s^{(i)}, g),
    \end{equation}
    and $\hat{u}=0$ elsewhere.
    Consequently, when $\tilde{Q}_k(s^{(i)}, g) = \tilde{Q}_k(s^{(i)}, d) = 0$,
    ties are consistently broken in favor of the decoy action.

    Under this construction, at every state $s^{(i)}$, the $k$-lookahead policy $\pi_k$
    evaluates both actions as having equal truncated value,
    \begin{equation}
        \tilde{Q}_k(s^{(i)}, g) = \tilde{Q}_k(s^{(i)}, d) = 0,
    \end{equation}
    and therefore selects the decoy action $d$.
    As a result, $\pi_k$ receives zero reward throughout the episode:
    \begin{equation}
        R(\tau(\pi_k)) = 0.
    \end{equation}

    In contrast, the optimal policy $\pi^*$ selects the good action $g$ at every state $s^{(i)}$.
    At each segment, it collects reward $R_{\max}$ at the $(k+1)$-th step,
    yielding a total return
    \begin{equation}
        R(\tau^*) = N \cdot R_{\max}.
    \end{equation}
    Therefore,
    \begin{equation}
        R(\tau^*) - R(\tau(\pi_k))
        =
        R_{\max} \cdot \left\lfloor \frac{H-1}{k+1} \right\rfloor,
    \end{equation}
    which establishes the claimed lower bound.
\end{proof}

%% file: appendix/method.tex
\section{\method Details}
\label{sec:technical details}

\subsection{Framework Illustration}
\label{sec:framework}

We illustrate the \method framework in Figure \ref{fig:framework}.

\begin{figure}[!hb]
    \centering
    \includegraphics[width=0.95\linewidth]{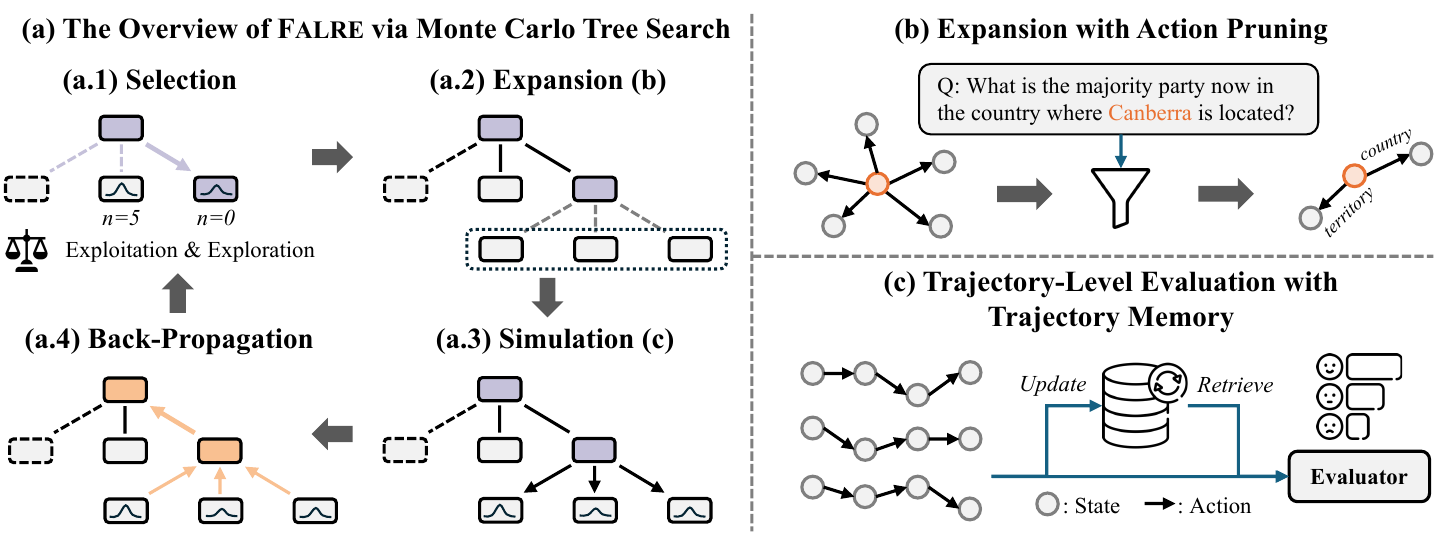}
    \caption{
    \textbf{Overview of \method via Monte Carlo Tree Search.}
    (\textit{a}) Planning proceeds in a receding-horizon manner. At each step, \method expands candidate actions with pruning for efficiency, performs repeated simulations, evaluates trajectories using trajectory-level feedback, propagates returns backward to update action values, and executes the highest-valued action before replanning in the next state.
    (\textit{b}) During simulation, actions are selected using a UCB-style rule to balance exploration and exploitation.
    (\textit{c}) Trajectory-level evaluation is amortized with a bounded trajectory memory, which retrieves and reuses scores for similar trajectories to reduce evaluation cost.
    Overall, the design enforces explicit lookahead and value propagation, allowing future outcomes to shape early decisions.
    }
    \label{fig:framework}
\end{figure}

\subsection{Pseudo-code}
\label{sec:pseudocode}

We summarize the comprehensive algorithmic procedure of FLARE in Algorithm \ref{alg:flare}.

\input{algorithm/method}

\subsection{Detailed Model Implementation}

\subsubsection{Explicit Lookahead}

A central failure mode in our diagnosis is that step-wise decision-making evaluates actions in isolation, based solely on local surrogate scores.
Explicit lookahead addresses this limitation by evaluating each candidate action through the future trajectories it induces under known environment dynamics.
Rather than assigning scores to actions directly, \method estimates their long-term consequences by simulating possible futures and comparing the resulting trajectory-level outcomes.

\noindent\textbf{Selection.}
To support explicit lookahead, \method organizes simulated futures using a search tree rooted at the current decision state, where nodes correspond to states and edges to actions.
Each simulation traverses this tree starting from the root $s_0 = s$, producing a partial trajectory
\(
\tau = (s_0, a_0, s_1, \dots).
\)

Selection determines how simulation effort is allocated across existing branches of the tree.
It does not decide which action will be executed, but balances exploration of under-sampled futures with refinement of futures whose downstream outcomes appear informative for long-horizon evaluation.
Importantly, selection does not rely on step-wise surrogate rewards and does not directly commit to executable actions; it only controls which futures are explored to support trajectory-level evaluation.

To guide this allocation, \method employs a UCB-style rule,
\begin{equation}
    \label{eq:select}
    a_t = \arg\max_{a \in \mathcal{A}(s_t)}
    \left(
    Q(s_t,a) + c \sqrt{\frac{\log N(s_t)}{N(s_t,a) + 1}}
    \right),
\end{equation}
where $Q(s_t,a)$ aggregates trajectory-level evaluations from previous simulations and $N(s_t,a)$ denotes visit counts.
Crucially, selection is guided by future trajectory outcomes rather than step-wise rewards.

\noindent\textbf{Expansion with Action Pruning.}
While selection governs traversal within the existing search tree, expansion determines which new future continuations are introduced for evaluation.
When a simulation encounters a previously uninstantiated state--action pair, \method expands the tree by adding the corresponding successor state, thereby exposing new candidate futures for trajectory-level comparison.
Expansion is performed incrementally and does not attempt to enumerate all possible continuations.

To keep explicit lookahead tractable, \method introduces action pruning as a key efficiency component.
At each decision state $s$, an action proposal function
\(
\phi : \mathcal{S} \rightarrow 2^{\mathcal{A}}
\)
constructs a restricted candidate set
\begin{equation}
    \mathcal{A}_k(s) = \phi(s), \qquad |\mathcal{A}_k(s)| \le k,
\end{equation}
which defines the actions whose induced futures may be introduced during expansion.
In our instantiation, $\phi$ is implemented using a LLM, though the planning mechanism is agnostic to the specific form of $\phi$.

Action pruning does not assign values to actions, does not determine their preference, and does not introduce additional decision signals, heuristics, or reward shaping.
It solely delimits the scope of explicit lookahead for computational efficiency.

\subsubsection{Value Propagation}

Explicit lookahead exposes candidate future trajectories, but effective long-horizon planning additionally requires that information from these futures influence early decisions.
\method achieves this through value propagation, which aggregates trajectory-level evaluations and propagates their outcomes backward to revise the estimated quality of earlier actions.

This mechanism is specifically designed to counteract the irreversibility of early commitments induced by step-wise greedy decision-making, as formalized in Proposition~3.1 and Proposition~3.2.

\noindent\textbf{Trajectory-Level Evaluation with Trajectory Memory.}
To support value propagation from future outcomes, \method evaluates simulated futures at the trajectory level rather than scoring individual actions or intermediate states.
When a simulated trajectory
\(
\tau = (s_0, a_0, \dots, s_H)
\)
reaches a terminal condition or a predefined horizon $H$, it is assigned a cumulative return
\begin{equation}
    \label{eq:return}
    R(\tau) = \sum_{t=0}^{H-1} \hat{r}(s_t, a_t, s_{t+1}),
\end{equation}
where $\hat{r}$ denotes an evaluative signal available at planning time.
This definition is agnostic to the specific form of feedback.
Trajectory-level evaluation ensures that actions are judged solely by their downstream consequences, avoiding biases induced by step-wise surrogate signals.

To amortize trajectory evaluation under a fixed computational budget, \method maintains a trajectory memory
\(
\mathcal{M} = \{(\tilde{\tau}, R(\tilde{\tau}))\}
\)
of previously evaluated trajectories.
Reuse is governed by a similarity function
\(
\mathrm{sim} : \mathcal{T} \times \mathcal{T} \rightarrow \mathbb{R}.
\)
Given a newly simulated trajectory $\tau$, \method retrieves
\begin{equation}
    \label{eq:retrieve}
    \tilde{\tau}^* = \arg\max_{\tilde{\tau} \in \mathcal{M}} \mathrm{sim}(\tau, \tilde{\tau}),
\end{equation}
and reuses $R(\tilde{\tau}^*)$ as an estimate for $R(\tau)$ whenever $\mathrm{sim}(\tau, \tilde{\tau}^*) \ge \delta$.
Otherwise, $R(\tau)$ is obtained by direct evaluation.
The memory is bounded by $|\mathcal{M}| \le M$ to control storage cost.

Trajectory memory does not modify the evaluation objective, alter trajectory returns, or bias action selection beyond reusing previously computed feedback.
Its role is limited to amortizing trajectory-level evaluation under fixed computational budgets.

\noindent\textbf{BackPropagation.}
For each simulated trajectory $\tau = (s_0, a_0, \dots, s_H)$ with return $R(\tau)$, \method propagates the trajectory-level outcome backward to all state--action pairs $(s_t, a_t)$ along the trajectory.
The value estimate $Q(s_t, a_t)$ is updated by aggregating returns from all trajectories passing through that pair, allowing future outcomes to revise the estimated quality of earlier actions.

After a fixed number of simulations, \method selects the next action by reading out the propagated values at the root,
\begin{equation}
    a^* = \arg\max_{a \in \mathcal{A}(s)} Q(s,a).
\end{equation}
Action selection is thus governed by aggregated trajectory-level evidence rather than step-wise heuristics, enabling principled revision of early decisions based on their downstream consequences.

\subsection{Time Complexity Analysis}
\label{sec:complexity}

We analyze the time complexity of Algorithm~\ref{alg:flare} per decision step with simulation budget $S$ and planning horizon $H$.

\noindent\textbf{Notation.}
Let $k$ be the maximum number of actions returned by the proposal function $\phi$ (i.e., $|\mathcal{A}_k(s)|\le k$). Let $C_T$ denote the cost of one transition call $T(s,a)$, $C_\phi$ the cost of one invocation of $\phi(s)$, $C_r$ the cost of computing $R(\tau)$ in Eq.~(\ref{eq:return}), and $C_{\mathrm{sim}}$ the cost of computing $\mathrm{sim}(\tau,\tilde{\tau})$ for a pair of trajectories. Let $|\mathcal{M}| \le M$ be the trajectory-memory size. We write $S_{\mathrm{exp}}$ for the number of expansions performed across $S$ simulations; clearly $S_{\mathrm{exp}} \le S$.

\noindent\textbf{Per-simulation cost.}
Each simulation constructs a trajectory of length at most $H$, but may terminate earlier due to the \texttt{break} upon expanding an unexpanded state. We upper bound costs using $H$.

\emph{(1) Tree traversal and transitions.}
In the inner loop, each step performs selection (Eq.~(\ref{eq:select})) over at most $k$ actions and one transition call $T(s,a)$, yielding
\begin{equation}
    O(Hk + H C_T).
\end{equation}

\emph{(2) Expansion with action pruning.}
When a new state is expanded, Algorithm~\ref{alg:flare} calls $\phi(s)$ once and materializes at most $k$ outgoing actions, costing
\begin{equation}
    O(C_\phi + k)
\end{equation}
for that simulation. If no new state is expanded, this cost is $0$.

\emph{(3) Trajectory evaluation and memory retrieval.}
Memory retrieval requires identifying $\tilde{\tau}^*$ via Eq.~(\ref{eq:retrieve}), which in the worst case scans up to $M$ stored trajectories and computes similarity for each:
\begin{equation}
    O(M C_{\mathrm{sim}}).
\end{equation}
If reuse fails, direct evaluation additionally costs $O(C_r)$.

\emph{(4) Backpropagation.}
The backpropagation loop visits each state-action pair along $\tau$, costing
\begin{equation}
    O(H),
\end{equation}
assuming $Q(s,a)$ is updated using an incremental aggregate (e.g., running mean) rather than recomputing expectations from scratch.\footnote{If $Q(s,a)$ is recomputed by scanning all trajectories in $\mathcal{M}$, the update cost would add a factor depending on $|\mathcal{M}|$; our implementation maintains per-edge aggregates for $O(1)$ updates.}

\noindent\textbf{Total complexity.}
Summing over $S$ simulations, the total time complexity is
\begin{align}
    O\Big(
     & S (Hk + H C_T + M C_{\mathrm{sim}} + C_r + H)
    \;+\; S_{\mathrm{exp}} (C_\phi + k)
    \Big),
\end{align}
where $S_{\mathrm{exp}} \le S$ is the number of expansions (and thus the number of $\phi$ calls). Under fixed $S,H,k,M$, \method has a bounded per-step compute budget, and the dominant terms scale linearly with $S$ and $H$.

\noindent\textbf{Remarks.}
Compared to step-wise methods that invoke $\phi$ once per decision and select actions greedily or via beam search, \method trades additional computation $O(SH)$ for explicit lookahead and value propagation, enabling future-aware evaluation of early actions.

%% file: algorithm/method.tex
\begin{algorithm}[!ht]
    \caption{FLARE}
    \label{alg:flare}
    \begin{algorithmic}[1]
        \REQUIRE Initial state $s_0$, transition function $T$, action proposal function $\phi$, planning horizon $H$, simulation budget $S$
        \ENSURE Optimal action $a^*$

        \STATE Initialize $N(s,a) \leftarrow 0, Q(s,a) \leftarrow 0, \mathcal{M} \leftarrow \emptyset$

        \FOR{$i = 1$ to $S$}
        \STATE $s \leftarrow s_0, \tau \leftarrow (s_0)$

        \STATE // Explicit Lookahead
        \FOR{$t = 1$ to $H$}
        \IF{$s$ is not expanded}
        \STATE $\mathcal{A}_k(s) \leftarrow \phi(s)$
        \STATE Expand $s$ with $\mathcal{A}_k(s)$
        \STATE \textbf{break}
        \ENDIF

        \STATE Select $a$ via Eq. (\ref{eq:select})

        \STATE $s' \leftarrow T(s,a)$
        \STATE $\tau \leftarrow \tau \cup (a, s')$
        \STATE $s \leftarrow s'$
        \ENDFOR

        \STATE // Trajectory Evaluation
        \STATE Retrieve $\tilde{\tau}^*$ from $\mathcal{M}$ via Eq. (\ref{eq:retrieve})
        \IF{$\mathrm{sim}(\tau,\tilde{\tau}^*) \ge \delta$}
        \STATE $R(\tau) \leftarrow R(\tilde{\tau}^*)$
        \ELSE
        \STATE Evaluate $R(\tau)$ via Eq. (\ref{eq:return})
        \STATE $\mathcal{M} \leftarrow \mathcal{M}\cup (\tau, R(\tau))$
        \ENDIF

        \STATE // Backpropagation
        \FORALL{$(s, a) \in \tau$}
        \STATE $N(s,a) \leftarrow N(s,a) + 1$
        \STATE $Q(s,a) \leftarrow \mathbb{E}[R(\tau)|(\tau, R(\tau))\in \mathcal{M}, (s,a)\subset \tau]$
        \ENDFOR
        \ENDFOR

        \STATE $a^* \leftarrow \arg\max_{a\in \mathcal{A}_k(s_0)} Q(s_0,a)$
        \STATE \textbf{return} $a^*$
    \end{algorithmic}
\end{algorithm}

%% file: appendix/experimental_setup.tex
\section{Experimental Details}
\label{sec:experimental}

\subsection{Benchmarks}

\noindent\textbf{Knowledge Graph Question Answering.}
We instantiate long-horizon planning in a controlled and fully structured setting using KGQA, which is an extension of graph learning \citep{wang2024gft,wang2025generative,wang2025towards,wangbeyond} in question answering.
Following \citep{sun2024thinkongraph}, we evaluate all methods on three widely-used KGQA benchmarks:
ComplexWebQuestions (CWQ)~\cite{talmor2018web}, WebQSP~\cite{yih2016value} and GrailQA~\cite{gu2021beyond},
all constructed on Freebase \citep{bollacker2008freebase} and requiring multi-hop relational reasoning to traverse from a topic entity to the target answer entity.
These datasets exhibit substantially different shortest-path length distributions, ranging from relatively shallow queries in WebQSP to long-horizon and compositional queries in CWQ and GrailQA.
WebQSP is designed to test I.I.D. generalization on questions over WebQuestions, which mainly contains questions that can be answered with 1-2 hop reasoning paths. CWQ is extended from WebQSP with four additional types of complex questions, involving moderately long multi-hop reasoning (typically 2-4 hops). GrailQA is a diverse KGQA dataset designed to test different levels of generalization, which emphasizes more complex compositional reasoning.
This diversity makes KGQA a suitable sandbox for systematically studying how planning performance degrades as the required decision horizon increases.
Crucially, we treat KGQA not as a question answering task per se, but as a sequential decision-making problem defined on an explicit state space.
To isolate agent-side planning behavior from confounding factors such as retrieval failure or incomplete environment access, we adopt an oracle-structure evaluation setting throughout all experiments.
For each question, we construct a subgraph that is guaranteed to contain at least one valid path from the topic entity to the ground-truth answer.
Under this setting, the agent has explicit access to the state space, transition structure, and successor actions at each step.
As a result, any performance difference can be attributed solely to the planning strategy rather than missing environment information or knowledge access.

\noindent\textbf{Tool Usage.}
We additionally evaluate planning performance in ALFWorld \citep{shridhar2021alfworld}, a text-based household environment aligned with the embodied ALFRED benchmark~\citep{shridhar2020alfred}.
ALFWorld consists of six categories of goal-oriented tasks in which an agent must accomplish a high-level objective by navigating and interacting with a simulated home through discrete text actions.
Individual task instances often require long action sequences and structured exploration over dozens of locations, making the environment a representative testbed for long-horizon decision making with hierarchical subgoals.
Beyond horizon length, ALFWorld also requires agents to leverage commonsense knowledge about typical object–location relationships (e.g., lamps are more likely to be found on desks or shelves), introducing an additional source of difficulty that is absent in fully structured graph environments.
Following prior work, we evaluate on the standard set of unseen test games under a task-specific setup.
To ensure a fair comparison between different agent frameworks, all methods are evaluated under identical environment access and action interfaces, and differ only in their planning strategies.

\subsection{Planning Strategies}

We compare three representative decision paradigms that differ in how future outcomes are incorporated into action selection.

\noindent\textbf{Single-Step Greedy Policy.}
The single-step baseline corresponds to a reasoning-based decision policy.
At each state $s_t$, the agent selects the action that maximizes a local step-wise surrogate score,
\begin{equation}
    a_t^{\text{greedy}} = \arg\max_{a \in \mathcal{A}(s_t)} \hat{r}(s_t, a, T(s_t,a)),
\end{equation}
where $\hat{r}$ denotes the evaluative signal available at planning time and $T$ is the deterministic transition function.
This policy optimizes only immediate outcomes and does not account for downstream consequences.
It therefore instantiates the step-wise greedy decision mechanism.

\noindent\textbf{Beam Search.}
Beam search extends the single-step paradigm by maintaining multiple candidate partial trajectories.
At depth $t$, the algorithm keeps a beam $\mathcal{B}_t$ consisting of the top-$B$ prefixes ranked by accumulated step-wise scores,
\begin{equation}
    S(\tau_{0:t}) = \sum_{i=0}^{t-1} \hat{r}(s_i, a_i, s_{i+1}),
\end{equation}
where $\tau_{0:t} = (s_0,a_0,\dots,s_t)$ denotes a partial trajectory.
At each step, all prefixes in $\mathcal{B}_t$ are expanded by one action, and only the top-$B$ resulting prefixes under $S(\cdot)$ are retained.
The next executed action is taken from the highest-scoring prefix.
We set $\mathcal{B} = 8$ by default.

Although beam search increases search width and can delay premature commitment, it still ranks trajectories solely using accumulated local scores and does not explicitly evaluate future outcomes.
As a result, it preserves the same step-wise scoring criterion as the greedy policy and remains vulnerable to the structural failure modes in Single-step greedy policy.

\noindent\textbf{Shallow Lookahead.}
The lookahead baseline augments step-wise decision making with limited forward simulation.
At each state $s_t$, for every candidate action $a \in \mathcal{A}(s_t)$, the agent performs rollouts of fixed depth $k$ ($k=2$ by default) using the true transition function to obtain a truncated trajectory
\(
(s_t, a, s_{t+1}, a_{t+1}, \dots, s_{t+k}).
\)
The cumulative return along this rollout is used as a proxy for the action's long-term utility,
\begin{equation}
    \hat{R}_k(s_t,a) = \sum_{i=0}^{k-1} \hat{r}(s_{t+i}, a_{t+i}, s_{t+i+1}),
\end{equation}
where subsequent actions are selected greedily according to the same step-wise score.

The selected action is then
\begin{equation}
    a_t^{\text{lookahead}} = \arg\max_{a \in \mathcal{A}(s_t)} \hat{R}_k(s_t,a).
\end{equation}

This baseline introduces explicit future evaluation but only over a limited horizon and without backward value propagation.
Consequently, early decisions remain dominated by truncated estimates and cannot be revised using information from deeper parts of the trajectory, distinguishing shallow lookahead from full planning mechanisms such as our proposed \method.

\subsection{Baselines}

\noindent\textbf{Inference-Only Methods.}
Standard Prompting (IO Prompt)~\cite{brown2020language} directly maps input prompts to outputs without exposing intermediate reasoning, which is proved to be superior in few-shot learning problems than traditional LMs.
Chain-of-Thought Prompting (CoT)~\cite{wei2022chain} generates a series of intermediate reasoning steps, enabling complex multi-step reasoning by decomposing complex problems into simpler sequential tasks.
Self-Consistency (SC)~\cite{wang2023selfconsistency} improves reasoning reliability by sampling multiple reasoning paths (e.g., CoT outputs) and selecting the most consistent answer.

\noindent\textbf{MCTS-based Methods.}
Tree-of-Thought (ToT)~\cite{yao2023tree} extends CoT by exploring multiple reasoning branches in a tree structure, allowing the model to evaluate, backtrack and select among diverse intermediate thoughts rather than following a linear reasoning path.
Reasoning-via-Planning (RAP)~\cite{hao2023reasoning} reformulates LLM reasoning as MCTS-based planning guided by an internal world model, where intermediate thought generation corresponds to planning over world states.
Language Agent Tree Search (LATS)~\cite{zhou2024language} proposes a unified framework where LLMs perform reasoning, acting and planning through a MCTS process, constructing the best action trajectories by integrating external feedback and self-reflection.

\noindent\textbf{KGQA Methods.}
KB-Binder~\cite{li2023few} is designed to address the heterogeneity of items from different knowledge bases, binding symbolic entities and relations into model's reasoning process to enable few-shot in-context learning over KGQA tasks.
Think-on-Graph (ToG)~\cite{sun2024thinkongraph} proposes a reasoning framework where LLMs perform multi-step reasoning by explicitly traversing a knowledge graph via beam search.
Think-on-Graph 2.0 (ToG 2.0)~\cite{ma2024think} extends the idea of ToG, improving graph exploration and pruning strategies to enhance scalability and consistency.
Reasoning-on-Graph (RoG)~\cite{luo2024reasoning} synergizes LLMs with knowledge graphs to leverage structural information within knowledge graphs and enable trustworthy reasoning.
Plan-on-Graph (PoG)~\cite{chen2024plan} is a self-correcting adaptive planning paradigm designed to achieve adaptive reasoning path exploration via question decomposition and sematic matching.
Graph-constrained Reasoning (GCR)~\cite{luo2025graphconstrained} bridges structured knowledge graphs with unstructured LLM reasoning via KG-Trie, constraining the decoding process and enabling scalable and faithful reasoning.
Deliberation over Priors (DP)~\cite{ma2025deliberation} adopts a progressive knowledge distillation algorithm and a reasoning-introspection strategy, integrating structural priors into LLMs and improving the faithfulness of reasoning path generation.
Reasoning-with-Trees (RwT)~\cite{shen2025reasoning} formulates KGQA as a discrete decision-making problem and adopts MCTS for iterative reasoning path refinement.
ProgRAG~\cite{park2025prograg} introduces a progressive retrieval and reasoning framework to mitigate hallucination. It progressively extends and refines partial reasoning paths by answering sub-questions decomposed from complex input questions.
Debate-on-Graph (DoG)~\cite{ma2025debate} leverages the interactive learning capabilities of LLMs and introduces a multi-role debate system to gradually simplify complex questions, reducing the influence of false-positive relations.
PathMind~\cite{liu2025pathmind} proposes a path prioritization mechanism and a "Retrieve-Prioritize-Reason" paradigm to selectively guide LLMs with important reasoning paths.

\subsection{Implementation Details}

\noindent\textbf{Knowledge Graph Question Answering.}
We evaluate all methods using four LLM backbones with varying capacities:
LLaMA-3.1-8B, LLaMA-3.1-70B, GPT-4o-mini, and GPT-4.
We report Hits@1 as the primary metric.
We use Think-on-Graph (ToG) \citep{sun2024thinkongraph} and Plan-on-Graph \citep{chen2024plan} frameworks as the agent backbone, which provide a standardized interface for graph traversal, action proposal, and LLM-based reasoning.
To isolate the effect of decision-making mechanisms, we keep all components fixed and vary only the planning logic used to select actions.
For \method, we fix the number of MCTS simulations per decision step and apply action-space pruning using a top-$k$ proposal mechanism ($k=8$ by default), and the similarity threshold for memory reuse as $\delta=0.9$, the memory size as $M=200$.
We set the rollout depth as $H=3$, the number of simulations as $S=16$, we call LLMs to evaluate rollouts rather than steps for efficiency,
the tree policy constant $c=1.4$.

\noindent\textbf{Tool Usage.}
We follow the standard ALFWorld in AgentGym setup \citep{xi2025agentgym} for tool-use tasks, using text-based observations and symbolic tool interfaces. Each task specifies a goal that can only be achieved by executing a correct sequence of tool calls. The environment is deterministic, and task success is evaluated by a predefined terminal condition. Importantly, certain locally plausible actions can lead to irreversible dead ends, after which no valid action sequence can satisfy the goal.
We instantiate two agent backbones, ReAct \citep{yao2022react} and Reflexion \citep{shinn2023reflexion}, which interleave reasoning and action in a step-wise manner. On top of each base agent, we implement our planning strategies.
All strategies share identical prompts, tool descriptions, and execution budgets, differing only in how future outcomes are evaluated and propagated during decision-making.
All experiments use GPT-4o-mini as the backbone language model. We evaluate each method on 200 tasks sampled from the environment. For each task, we record whether the agent successfully completes the goal and track the position of the first irreversible error, defined as the earliest step after which no valid continuation can lead to success. Reported metrics are averaged across tasks.
In addition to success rate, we report the first-error position to characterize how planning failures arise. Unlike success rate, which only reflects the final outcome, the first-error position provides a more direct measure of early decision quality in long-horizon tasks.

\subsection{Construction of Myopic Traps}

As shown in Section \ref{sec:diagnosis}, to systematically study early-stage failures induced by step-wise greedy decision making, we explicitly construct \emph{myopic traps} at the first decision step using only environment dynamics and oracle information, independent of any particular agent or planning algorithm.
For each task instance, we consider the initial state and enumerate all feasible successor actions available at the first step.
For each candidate action, we evaluate it from two complementary perspectives.
First, we compute its local step-wise score using the same evaluative signal that governs reasoning-based policies, reflecting how appealing the action appears under single-step decision making.
Second, we estimate its long-term utility by analyzing the resulting successor state in the oracle subgraph, including whether the ground-truth answer entity remains reachable and, when applicable, the length of the shortest valid path to any correct answer.

An action is labeled as a myopic trap if it satisfies both of the following conditions:
(1) it ranks among the top actions according to the local step-wise score, and
(2) it leads to a region of the state space with substantially worse long-term prospects than at least one alternative action, such as losing reachability to the answer or requiring significantly longer solution paths.
To ensure a meaningful causal comparison, we additionally require that for each instance there exists at least one non-trap action whose successor state admits a valid solution path to the correct answer.
This guarantees that failure under trap selection is not caused by task infeasibility, but by the discrepancy between local and long-term evaluation.
Importantly, trap construction is performed offline using the explicit state space, transition structure, and oracle subgraphs, and does not depend on the behavior of any LLM, planning strategy, or learned policy.
This procedure isolates myopic traps as an intrinsic property of the environment and evaluation signal, enabling controlled analysis of how early locally optimal decisions are amplified into long-horizon planning failures.

%% file: appendix/experimental_results.tex
\section{Supplementary Experiments}

\subsection{Comprehensive Planning Mechanism Analysis}
\label{sec:mechanism}

\input{table/diagnosis-full}

This section provides a detailed mechanism-level analysis that complements the aggregated results reported in Table~\ref{tab:diagnosis-summary}.
All experiments are conducted under the same oracle-structure setting described in Section~5, using identical state representations, transition dynamics, and evaluative signals.
Results are averaged over multiple random seeds.

\noindent\textbf{Metrics.}
We report three metrics designed to characterize long-horizon decision dynamics:

\begin{itemize}
    \item \textbf{(Constructed) Trap@1}: the probability that the agent selects a constructed myopic trap at the first decision step;
    \item \textbf{First-Error Step}: the average decision step at which the trajectory first deviates from any optimal path;
    \item \textbf{Recovery@First-Error}: the probability of reaching the correct answer after the first deviation occurs.
\end{itemize}

Together, these metrics quantify early-stage decision bias, error accumulation, and post-error recoverability.

\noindent\textbf{Aggregated comparison.}
As shown in Table~\ref{tab:diagnosis-summary}, single-step greedy decision making exhibits a high rate of early trap selection (55.6\%) and deviates from optimal trajectories within the first two steps on average (1.6).
Once an error occurs, recovery is rare (5.4\%), indicating that step-wise greedy policies are prone to early irreversible commitment.
Beam search delays the first deviation slightly (average step 2.0) but substantially increases the probability of selecting a myopic trap at the first step (71.9\%).
This behavior is consistent with its reliance on accumulated step-level surrogate scores, which amplify local relevance signals without encoding long-term feasibility.
Consequently, recovery remains limited (11.4\%), and long-horizon failures persist.
Shallow lookahead significantly reduces early trap selection (23.6\%) and improves both the position of the first error and post-error recovery (22.4\%).
These results indicate that limited forward evaluation partially mitigates local decision bias.
However, recovery remains constrained, suggesting that truncated rollouts without backward value propagation are insufficient to fully stabilize long-horizon behavior.
\method achieves the strongest performance across all three metrics.
It reduces Trap@1 to 17.8\%, postpones the first deviation to an average of 3.2 steps, and increases recovery probability to 29.7\%.
This improvement is consistent with the use of trajectory-level value propagation, which allows downstream outcomes to revise earlier action preferences and mitigates irreversible early commitment.

\noindent\textbf{Per-dataset analysis.}
Table~\ref{tab:diagnosis-full} reports the same metrics separately for CWQ, WebQSP, and GrailQA.
Across all datasets, \method consistently attains the lowest trap selection rate, the latest first-error position, and the highest recovery probability.
Absolute values vary across benchmarks, reflecting differences in planning horizon length and structural complexity.
In particular, CWQ and GrailQA exhibit higher Trap@1 rates and lower recovery probabilities for all methods, indicating more severe long-horizon challenges.
Nevertheless, the relative advantages of \method remain stable across datasets, confirming that the aggregated trends in Table~\ref{tab:diagnosis-summary} do not arise from any single benchmark but reflect a consistent improvement in decision dynamics across diverse KGQA settings.

\subsection{Comprehensive Ablation Study}

Table~\ref{tab:ablation-full} and Figure \ref{fig:ablation-tradeoff} reports the ablation results of \method on CWQ, WebQSP, and GrailQA, including both task performance (Hits@1) and computational cost measured by the number of tokens consumed during planning.
We study two design components that are not part of the planning objective itself but affect how efficiently planning is carried out: Action Pruning and Trajectory Memory.

\input{table/ablation-full.tex}

\begin{figure}[!t]
    \centering
    \includegraphics[width=\linewidth]{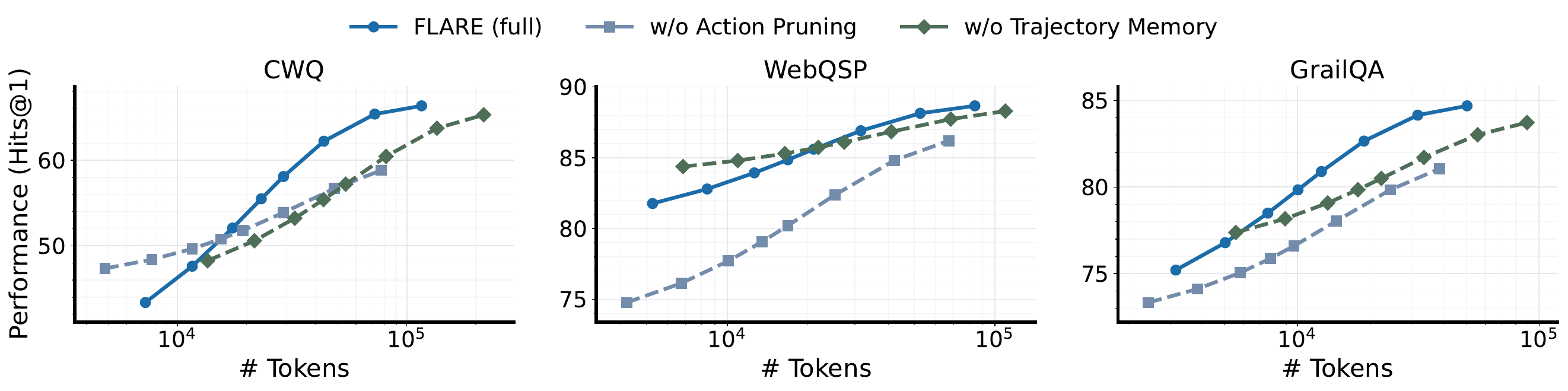}
    \caption{
        \textbf{Performance versus token budget for FLARE and its ablations.}
        FLARE achieves consistently higher accuracy across budgets, while removing action pruning or trajectory memory degrades performance and sample efficiency, demonstrating their complementary roles in efficient long-horizon planning.
    }
    \label{fig:ablation-tradeoff}
\end{figure}

\noindent\textbf{Action Pruning.}
Action pruning limits the branching factor during tree expansion by restricting the set of candidate actions evaluated at each state.
From Table~\ref{tab:ablation-full}, removing action pruning under the default configuration leads to a noticeable drop in Hits@1 across all datasets, indicating that unconstrained expansion degrades effective exploration under limited budgets.
When additional computation is provided to recover the same performance level as the full model (the Hits@1-aligned setting), token usage increases substantially (e.g., more than doubling on CWQ and tripling on WebQSP), demonstrating that similar accuracy can only be achieved at significantly higher cost without pruning.
Figure~\ref{fig:ablation-tradeoff} further reveals this effect in terms of performance--budget dynamics.
Across all three benchmarks, the curve without action pruning improves more slowly with increasing token budget and remains consistently below the full model at comparable cost.
This indicates that action pruning primarily shifts the efficiency frontier, enabling the planner to reach higher accuracy with fewer evaluations by preventing budget from being spent on low-quality branches.
Importantly, pruning does not alter the planning objective or evaluation signal, but improves how computational resources are allocated during lookahead.



\noindent\textbf{Trajectory Memory.}
Trajectory memory caches and reuses previously evaluated trajectories during rollout and value propagation, reducing redundant evaluation.
As shown in Table~\ref{tab:ablation-full}, disabling trajectory memory increases token consumption even when performance is similar, reflecting repeated evaluation of similar futures.
Under the token-aligned setting, where the memory-free variant is constrained to use the same computational budget as the full model, its performance drops consistently across all datasets, confirming that the performance gap cannot be explained solely by reduced computation.
The dynamic curves in Figure~\ref{fig:ablation-tradeoff} further illustrate this effect.
Removing trajectory memory shifts the entire performance--budget curve downward: at any fixed token budget, the memory-free variant attains lower Hits@1, and additional computation is required to approach the accuracy of the full model.
This behavior indicates that trajectory memory improves not only efficiency, but also the effectiveness of value accumulation under limited budgets, by stabilizing trajectory-level estimates and reducing noise in value propagation.

\noindent\textbf{Summary.}
Together, Table~\ref{tab:ablation-full} and Figure~\ref{fig:ablation-tradeoff} show that the two components affect the performance--cost trade-off in complementary ways.
Action pruning primarily improves exploration efficiency, shifting the Pareto frontier by enabling higher performance at lower computational cost.
Trajectory memory improves evaluation efficiency, allowing the planner to convert a given computational budget into more reliable value estimates and higher final accuracy.
Neither component changes the planning objective itself; instead, both modify the dynamics by which performance accumulates as computation increases, resulting in a consistently superior efficiency--performance frontier for the full \method model across all datasets.

\subsection{Failure Pattern Analysis}
\label{sec:failure-patterns}

\begin{figure*}[!t]
    \centering
    \includegraphics[width=\linewidth]{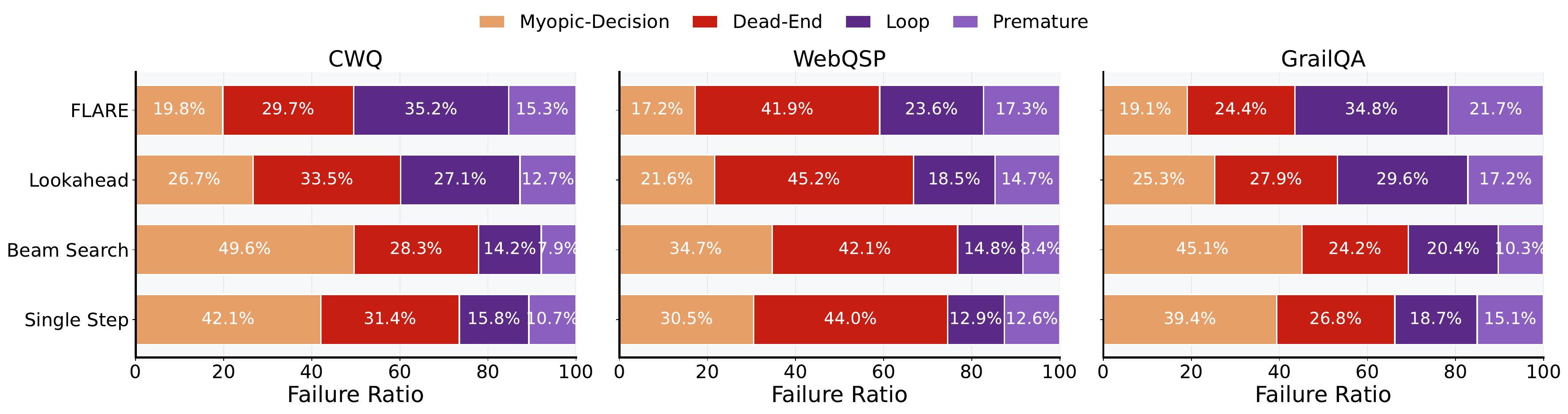}
    \caption{
        \textbf{Distribution of failure types under different planning strategies.}
        Unsuccessful trajectories are categorized into four planning-related failure modes: myopic deviation, dead-end, loop, and premature termination.
        Results are computed from 50 randomly sampled failed trajectories per method and dataset.
    }
    \label{fig:failure-patterns}
\end{figure*}

This section provides a diagnostic analysis of failure patterns exhibited by different planning strategies.
The goal is to characterize the dominant mechanisms underlying unsuccessful trajectories and to identify which failure modes are mitigated by explicit planning and which remain.

\noindent\textbf{Failure categorization.}
For each method and dataset, we randomly sample and review 50 failed trajectories.
Each trajectory is manually assigned to one of the following categories based on its dominant failure mechanism:

\begin{itemize}
    \item \textbf{Myopic deviation}: the agent selects an early action that is locally appealing but eliminates or severely degrades long-term solvability;
    \item \textbf{Dead-end}: the agent reaches a state from which no valid path to the ground-truth answer exists in the oracle subgraph;
    \item \textbf{Loop}: the agent repeatedly revisits previously explored states without making progress toward the answer;
    \item \textbf{Premature termination (Premature)}: the agent halts execution before completing a valid reasoning chain.
\end{itemize}

Figure~\ref{fig:failure-patterns} reports the empirical distribution of these categories on CWQ, WebQSP, and GrailQA.

\noindent\textbf{Observed trends.}
Across all three datasets, step-wise greedy and beam-based strategies are dominated by myopic deviations, indicating that early locally optimal but globally suboptimal actions constitute the primary source of failure under step-wise or width-based decision making.
Shallow lookahead substantially reduces the frequency of myopic deviations, consistent with its use of limited future evaluation.
\method further reduces this category to the lowest level among all methods, reflecting improved early-stage decision quality.
As the prevalence of myopic deviations decreases, the relative frequency of loop and premature termination failures increases.
These failure modes are associated with limitations in exploration control and termination criteria rather than with incorrect early action selection.
Dead-end failures remain present but constitute a smaller fraction across all planning-based methods.

\noindent\textbf{Discussion.}
This analysis suggests that explicit lookahead with value propagation primarily addresses failures caused by early decision bias.
The remaining dominant failure modes under \method arise from challenges in exploration efficiency and termination control, which are not directly targeted by the current planning mechanism.
Overall, the results indicate that \method alters the qualitative structure of planning failures, shifting them from early irreversible deviations toward later-stage procedural issues such as looping and premature termination.

%% file: table/diagnosis-full.tex
\begin{table}[!t]
    \centering
    \caption{
        \textbf{Mechanism-level analysis of long-horizon decision dynamics on CWQ, WebQSP, and GrailQA.}
        We report the probability of selecting a constructed myopic trap at the first step (Trap@1; lower is better),
        the average step at which the trajectory first deviates from any optimal path (First-Error Step; higher is better),
        and the probability of reaching the correct answer after the first error occurs (Recovery@First-Error; higher is better).
        Results are computed under the oracle-structure setting and averaged over multiple runs.
    }
    \label{tab:diagnosis-full}
    \begin{tabular}{clcccc}
        \toprule
        \textbf{Dataset}          & \textbf{Method} & \textbf{Trap@1 $\downarrow$} & \textbf{First-Error Step $\uparrow$} & \textbf{Recovery@First-Error $\uparrow$} \\ \midrule
                                  & Single Step     & 56.3                         & 1.6                                  & 5.2                                      \\
                                  & Beam Search     & 72.4                         & 2.0                                  & 11.1                                     \\
                                  & Lookahead       & 24.7                         & 2.8                                  & 21.4                                     \\
        \multirow{-4}{*}{CWQ}     & \method         & \textbf{17.8}                & \textbf{3.4}                         & \textbf{29.6}                            \\ \midrule
                                  & Single Step     & 51.8                         & 1.8                                  & 6.1                                      \\
                                  & Beam Search     & 69.9                         & 2.2                                  & 12.9                                     \\
                                  & Lookahead       & 18.9                         & 2.9                                  & 26.4                                     \\
        \multirow{-4}{*}{WebQSP}  & \method         & \textbf{14.6}                & \textbf{3.2}                         & \textbf{31.7}                            \\ \midrule
                                  & Single Step     & 58.7                         & 1.5                                  & 4.8                                      \\
                                  & Beam Search     & 73.5                         & 1.9                                  & 10.2                                     \\
                                  & Lookahead       & 27.1                         & 2.6                                  & 19.5                                     \\
        \multirow{-4}{*}{GrailQA} & \method         & \textbf{20.9}                & \textbf{3.1}                         & \textbf{27.8}                            \\
        \bottomrule
    \end{tabular}
\end{table}

%% file: table/ablation-full.tex
\begin{table}[!t]
\centering
\caption{
\textbf{Ablation of action pruning and trajectory memory in FLARE.}
Results on CWQ, WebQSP, and GrailQA show that removing action pruning significantly reduces performance under similar budgets and requires substantially more tokens to recover accuracy, while removing trajectory memory mainly increases token usage and lowers efficiency under matched budgets.
}
\label{tab:ablation-full}
\resizebox{\linewidth}{!}{
\begin{tabular}{l | cccccccc}
\toprule
& \multicolumn{2}{c}{\textbf{CWQ}} & \multicolumn{2}{c}{\textbf{WebQSP}} & \multicolumn{2}{c}{\textbf{GrailQA}} & \multicolumn{2}{c}{\textbf{Overall}} \\
\cmidrule(lr){2-3}\cmidrule(lr){4-5}\cmidrule(lr){6-7}\cmidrule(lr){8-9}
\textbf{Methods}
& \textbf{Hits@1} & \textbf{\# Tokens}
& \textbf{Hits@1} & \textbf{\# Tokens}
& \textbf{Hits@1} & \textbf{\# Tokens}
& \textbf{Hits@1} & \textbf{\# Tokens} \\
\midrule
\method (Full)
& 58.1 & 29
& 85.6 & 21k
& 80.9 & 13k
& 74.9 & 21k \\
\midrule
\multicolumn{9}{l}{\textbf{\textit{(a) Action Pruning (performance-aligned)}}} \\ \midrule
w/o Pruning
& 51.8 & 19
& 80.2 & 17k
& 76.6 & 10k
& 69.5 & 15k \\
w/o Pruning (Hits@1-Aligned)
& 58.8 & 77
& 86.2 & 67k
& 81.1 & 39k
& 75.4 & 61k \\
\midrule
\multicolumn{9}{l}{\textbf{\textit{(b) Trajectory Memory (token-aligned)}}} \\
\midrule
w/o Memory
& 57.2 & 54
& 86.1 & 27k
& 80.5 & 22k
& 74.6 & 34k \\
w/o Memory (Token-Aligned)
& 53.2 & 32
& 85.7 & 22k
& 79.1 & 13k
& 72.7 & 22k \\
\bottomrule
\end{tabular}
}
\end{table}